\newtheorem{lemma}{Lemma}
\newtheorem{theorem}{Theorem}
\newtheorem{proposition}{Proposition}
\newtheorem{definition}{Definition}
\newcounter{remno} \setcounter{remno}{0}
\newenvironment{proof}{\noindent {\bf Proof. }}{\hfill  \\}
\newfont{\mfoo}{cmssdc10 scaled\magstep1}
\newfont{\mfo}{cmtt9 scaled\magstep1}
\DeclarePairedDelimiter{\ceil}{\lceil}{\rceil}
\newcommand{\cc}{\citet}
\newcommand{\eqqref}[1]{Eq.\,\eqref{#1}}
\newcommand{\inqqref}[1]{Ineq.\,\eqref{#1}}
\newcommand{\ismi}{\pi_g^O}
\newcommand{\gfpi}{\pi_g^F}
\newcommand{\enumr}{\begin{enumerate}[label=\roman{*})]}
\newcommand{\enumR}{\begin{enumerate}[label=\Roman{*})]}
\newcommand{\enuma}{\begin{enumerate}[label=\alph{*})]}
\renewcommand{\leq}{\leqslant}
\renewcommand{\le}{\leqslant}  
\renewcommand{\geq}{\geqslant}
\renewcommand{\ge}{\geqslant}
\newcommand{\uth}{\underline{\theta}}
\newcommand{\huth}{\hat{\underline{\theta}}}
\newcommand{\uthi}{\underline{\theta}\,_i}
\newcommand{\uuth}{\underline{\underline{\theta}}}
\newcommand{\uTh}{\underline{\Theta}}
\newcommand{\uThi}{\underline{\Theta}\,_i}
\newcommand{\uuTh}{\underline{\underline{\Theta}}}
\newcommand{\ism}{$\mathbf  g$-$\mathbf I\mathbf S\mathbf M$ index} 
\newcommand{\gfp}{$\mathbf  g$-Forcing}
 \newcommand{\ase}{ $\exists$  (a.s.) \ }
\title{
Minimal-Exploration Allocation Policies: \\
Asymptotic, Almost Sure, Arbitrarily Slow Growing Regret
 }
\author{{\bf 
Wesley Cowan} \\
   Department of Mathematics, Rutgers University\\
110 Frelinghuysen Rd., Piscataway, NJ 08854 
\and {\bf Michael N. Katehakis}\\
Department of Management Science and Information Systems\\
 100 Rockafeller Road, Piscataway, NJ 08854, USA
}
\begin{document}
\maketitle
\begin{abstract}
The purpose of this paper is to provide further understanding  into the structure of the sequential allocation (``stochastic multi-armed bandit'',  
or MAB) problem by establishing probability one finite horizon bounds and convergence rates for the sample (or ``pseudo'') regret associated with two simple classes of allocation policies $\pi$. 

For any slowly increasing function $g$, subject to mild regularity constraints, we construct two policies (the $g$-Forcing, and the $g$-Inflated Sample Mean) that achieve a measure of regret of order $  O(g(n))$ almost surely as $n \to \infty$, bound from above and below. Additionally, almost sure upper and lower bounds on the remainder term are established. In the constructions herein, the function $g$ effectively controls the ``exploration'' of the classical ``exploration/exploitation'' tradeoff. 
 
 \end{abstract}

\ \\
{\bf Keywords:}
  Forcing  Actions, Inflated Sample Means, Multi-armed Bandits, Sequential Allocation,  Online Learning

\section{Introduction and Summary}\label{sec:intro}

The basic problem involves  sampling sequentially from a finite number of $K \ge 2$    populations  or ``bandits,'' 
where each population $i$  is specified by a sequence of real-valued i.i.d. random variables, $\{ X^i_k \}_{k \geq 1}$,   with $X^i_k$ representing  the reward received the $k^{th}$ time 
population  $i$ is sampled. 
The distributions $F_i$ of the $X^i_k$ are taken to be unknown;  they belong to some collection of distributions  $\mathcal{F}$.  We restrict $\mathcal{F}$   in two ways: 

The first, that each population $i$ has some finite mean $\mu_i = \mathbb{E}[ X^i_k ] =\int_{-\infty}^{+\infty} xdF_i(x)< \infty$ -  unknown to the controller.  The purpose of this assumption is to establish for each population $i$ the Strong Law of Large Numbers (SLLN),
\begin{equation}\label{eqn:stln}
\mathbb{P} \left( \lim_k \bar X^i_k = \mu_i \right) = 1. 
\end{equation}

Second, we assert that each population  has finite variance $\sigma_i^2 = \text{Var}(X^i_k) < \infty$. The purpose of this assumption is to establish for each population $i$ the Law of the Iterated Logarithm (LIL),
\begin{equation}\label{eqn:lil}
\mathbb{P} \left( \limsup_k \pm \frac{\bar{X}^i_{k} - \mu_i}{ \sqrt{ \ln \ln k / k } } = \sigma_i \sqrt{2} \right) = 1.
\end{equation}

It will emerge that the important distribution properties for the populations are not the i.i.d. structure, but rather Eqs. \eqref{eqn:stln}, \eqref{eqn:lil} alone. This allows for some relaxation of assumptions, as discussed in Section \ref{sec:relax}. In fact, the LIL (and therefore the assumption of finite variances) is only really required for the derivation of the regret remainder term bounds in the results to follow - the primary asymptotic results depend solely on the SLLN.

 Additionally, we define $\mu^* = \max_i \mu_i$, and we take the optimal bandit to be unique - that is, there is a unique $i^*$ such that $\mu_{i^*} = \mu^*$.
It is convenient to define the bandit discrepancies $\{ \Delta_i \}$ as $\Delta_i = \mu^* - \mu_i \geq 0$.

 For any    adaptive policy  $\pi$,  let 
 $\pi(t) = i$ indicate the event that population $i$ is sampled at time $t$,  and  let 
 $T^i_\pi(n) = \sum_{t = 1}^n \mathbf {1}_{\pi(t) = i}$ 
 denote the number of times $i$  has been sampled during  periods $t = 1, 2, \ldots, n$,  under policy $\pi$; for convenience we define $T^i_\pi(0) = 0$ for all $i, \pi$.
One is typically interested in  maximizing in some well defined sense 
   the  sum    of the first $n$ outcomes
$S_\pi(n)  = 
  \sum_{i=1}^K     \sum_{k = 1}^{T^i_\pi(n)}  X^i_k ,$ achieved by   an adaptive  policy $\pi  .$ 
To this end we note
that if   the   controller had complete information (i.e., knew the distributions of the $X^i_k$, for each $i$), she would at every round activate the ``optimal'' bandit $i^*$.  Natural measures of the  loss due to this ignorance 
of the distributions, are the quantities below:
 
%
%
\begin{align}
\label{eqn:pseudo-regret}
\tilde{R}_\pi(n) 
& = n \mu^* - \sum_{i = 1}^K \mu_i  T^i_\pi(n) 
 = \sum_{i = 1}^K \Delta_i T^i_\pi(n) ,\\   
   \label{eqn:regret}
     R_\pi(n)  & = n \mu^* -\mathbb{E}\left[S_\pi(n)\right]
= \sum_{i = 1}^K \Delta_i \mathbb{E}\left[ T^i_\pi(n) \right]. 
\end{align}

The
functions $\tilde{R}_\pi(n) $,   $R_\pi(n)$ have been called in the literature  pseudo-reget,  and regret;  for notational simplicity their  dependence  
 on the unknown  distributions is usually  suppressed.  

The motivation for considering minimizing alternative regret measures to $R_\pi(n)$ is that while the investigator might be pleased to know that the policy she is utilizing has minimal \emph{expected} regret, she might reasonably be more interested in behavior of the policy on the specific sample-path she is currently exploring rather than aggregate behavior over the entire probability space.
At an extreme end of this would be a result minimizing regret or pseudo-regret surely (sample-path-wise) or almost surely (with full probability), guaranteeing a sense of optimality independent of outcome. We offer an asymptotic result of this type here  in Theorem 2.

Note that $    \mathbb{E} [\tilde{R}_\pi(n)] = R_\pi(n) $, and 
  ``good policies'' are those  that achieve a small rate of increase for one of the above regret functions. 
 Further relationships and forms of   pseudo-regret  are explored in \cc{bubeck2012regret}, e.g., 
the ``sample regret''    $ R'_\pi(n)   = n \mu^* - S_\pi(n)
 = n \mu^* -    \sum_{i=1}^K     \sum_{k = 1}^{T^i_\pi(n)}  X^i_k .$ 
 We find the pseudo-reget
$ \tilde{R}_\pi(n) 
  = n \mu^* - \sum_{i = 1}^K \mu_i T^i_\pi(n) 
$
in some sense more philosophically satisfying to consider than sample regret, for the reason that - given her ignorance and the inherent randomness - the controller cannot reasonably regret the {\sl specific reward} gained or lost from an activation of a bandit, as in $  R'_\pi(n) .$ She can only reasonably regret {\sl the decision to activate that specific bandit}, which is captured by $\tilde{R}_\pi(n)$'s dependence on the $T^i_\pi(n)$s alone.

Thus, we  are particularly interested in high probability or guaranteed (almost sure) asymptotic bounds on the growth of the pseudo-regret as $n \to \infty$. 
The main result of this paper is Theorem \ref{thm:exist}  which establishes, by two examples,  that for any arbitrarily (slowly) increasing function $g(n)$, e.g.,  $g(n) = \ln \ln \ldots \ln n$, that satisfies mild regularity conditions  
 there exist   ``$g$-good policies'' ${\pi_g}$\,. The later policies are such that the following is true 
$$\tilde{R}_{\pi_g}(n)=C_{\pi_g}( \{ F_i \} ) g(n) + o(g(n)) , \mbox{ as $n\to \infty$} $$ 
(i.e., $\tilde{R}_{\pi_g}(n) = O(g(n)), \ \mbox{(a.s),  as \ } n \to \infty  $)   for every set of bandit distributions $\{ F_i \} \subset \mathcal{F}$, 
 for some positive 
finite constant $C_{\pi_g}( \{ F_i \}) $.

The results presented here are in fact intuitive, in the following way: it will be shown that in the \gfp\ and \ism\ policies, the function $g$ essentially sets the investigator's willingness to explore and experiment with bandits that do not currently (based on available data) seem to have the highest mean. Even if the controller explores very slowly (i.e., she chose a very slow growing $g$), {\sl as long as she explores long enough} she will eventually develop accurate estimates of the means for each bandit, and incur very little regret (or pseudo-regret) past that point. We note here that, for the most part, we   do not recommend the actual implementation or use of these policies. The cost of this guaranteed asymptotic behavior is that (depending on $g$ and the bandit specifics), slow pseudo-regret growth is only achieved on impractically large time-scales. We find it interesting, however, that such growth can be guaranteed - independent of the specifics of the bandits! - with as weak assumptions as the Strong Law of Large Numbers. This makes these results fairly broad. Additionally, the \gfp\ and \ism\ policies individually capture elements present in many other popular policies, and are suggestive of the almost sure asymptotical behavior of these policies. One takeaway from this is, perhaps, to emphasize that asymptotic behavior by itself is little basis for thinking of a policy as ``good''. As essentially any asymptotic behavior is possible (through the choice of $g$), any useful qualification of a policy must consider not only the asymptotic behavior, but also the timescales over which it is practically achieved.

In the remainder of the paper, we define what it means for a policy to be $g$-good (Definition \ref{def:1}), and establish the existence of $g$-good policies (Theorem \ref{thm:exist}) for any $g$ satisfying mild regularity conditions. The proof is by example, through the construction of \gfp\ and \ism\ policies that satisfy its claim. Further, bounds on the corresponding order constants of pseudo-regret growth are established for each policy (Theorems \ref{thm:gfg} and \ref{thm:uindex-unique}), as well as bounds on the asymptotic remainder terms (Theorems \ref{thm:gfg-strong} and 5 \ref{thm:uindex-unique-remainder}), bounding the remainder from both above and below. We view the proofs of the asymptotic lower bounds, as well as the derivation of the remainder terms via a sort of ÔbootstrappingÕ on the earlier order results, as particularly interesting.

In the attempt to generalize some of these results for the \ism\ policy, an interesting effect and seeming ``phase change'' in the resulting dynamics was discovered. Specifically, as  discussed in Remark 2, when there are multiple optimal bandits,  for $g$ of order greater than $\sqrt{ n \ln \ln n }$ all optimal bandits are sampled roughly equally often, while for $g$ of order less than $\sqrt{n \ln \ln n}$, the \ism\ policy tends to fix on a single optimal bandit, sampling the other optimal bandits much more rarely in comparison.

\section{Related Literature}\label{sec:intro}

\cc{Rb52} first analyzed  the problem of maximizing asymptotically the expected value of 
  the  sum     
$S_\pi(n) .$ Using 
  only the assumption of the Strong Law of Large Numbers for $\mathcal{F}, $ for $K=2$. He constructed  a modified (outside two sparse sequences of  forced choices)  ``play the winner'' (greedy)  policy, $\pi_R$,  such that  with probability one, as $n\to \infty ,$  $S_{\pi_R}(n)/n \to  \mu^* $.
 From this  he was able to claim, using the  uniformly integrability  property 
 for the case of Bernoulli bandits that 
 \begin{equation} \label{eqn:regret1}
  R_{\pi_R}(n)   = o(n), \ \mbox{as $n\to \infty $}.
 \end{equation}

 \cc{lai85}  considered  the case in which the collection of distributions $\mathcal{F}$     to  consist  of  univariate density functions $f(x;\theta_i)$ with respect to some measure $\nu_i ,$ where $f( . ; .)$ is known and the unknown  scalar parameter $\theta_i $ is in some known set $    \Theta .$ 
 Let $\mu_i=\mu (\theta_i)= \mathbb{E}[X^i_1],$  
   $\mu^*=\max_i\{\mu (\theta_i)\}=\mu(\theta^*), $   $\Delta_i(\theta_i)=\mu(\theta^*)-\mu (\theta_i)$, and let
   $\mathbb{I} (\theta ||\theta') = \int_{-\infty}^\infty \! \ln 
   \frac{f(x;\theta)}{f(x;\theta')} f(x;\theta) \, \mathrm{d}v(x)  $
   denote the Kullback - Leibler divergence between $f(x;\theta)$ and $f(x;\theta').$ They established, under mild regularity conditions ((1.6), (1.7) and (1.9) therein),  that if one requires a policy 
to have  a regret that increases at  slower than linear rate: 
\begin{equation} \label{eqn:regret2}
  R_\pi(n)   = o(n^\alpha), \ \mbox{$\forall \alpha>0$, as $n\to \infty $}, 
\mbox{ $\forall \left\{ \theta_i \right\} \subset \Theta$} , 
\end{equation}
then $\pi$  must sample among populations in such as way that its regret satisfies 
\begin{equation}\label{en:lower-bound-lr}
\liminf_n \frac{ R_\pi(n) }{ \ln n } \geq M_{\text{LR}}( \theta_1, \ldots,
\theta_K),  \ 
\mbox{ $\forall \left\{ \theta_i \right\} \subset \Theta$} , 
\end{equation} 
where  
 $$M_{\text{LR}}( \theta_1, \ldots,
\theta_K)=  \sum_{i:\mu(\theta_i) \neq \mu^*}   \Delta_i(\theta_i)/\mathbb{I} (\theta_i ||\theta^*).$$

  \cc{bkmab96}
extended and simplified  the above  work for the case in which   the collection of distribution $\mathcal{F}$ is specified by a known function 
 $f(x;\uthi)$ that may depend on 
an unknown  vector parameter $\uthi \in \uThi ,$   as follows. 
Let $\uuth :=(\uth\,_1,\ldots,\uth\,_K)\in\uuTh =\uTh\,_1\times\cdots\times\uTh\,_K,$ \ $\mu^*=\mu(\uuth^*)=\max_i\{\mu(\uthi)\},$ $\Delta_i(\uth\,_i)=\mu^*-\mu(\uthi)$. 
%
They showed, under certain  regularity conditions (part 1 of Theorem 1, therein) that if a policy satisfied 
\eqqref{eqn:regret2},  $\forall \uuth\in\uuTh,$ 
 then it must sample among populations in such as way that its regret satisfies: 
\begin{equation}\label{en:lower-bound-bk}
\liminf_n \frac{ R_\pi(n) }{ \ln n } \geq M_{\text{BK}}(\uuth),  \ 
\mbox{ $\forall \uuth\in\uuTh$} , 
\end{equation} 
where  
%
\begin{equation}\label{en:lb-bk}
M_{\text{BK}}(\uuth)=  \sum_{i\in B(\uuth)}   
 \Delta_i(\uthi)/\inf_{\uthi'}\{ \mathbb{I}(\uthi,\uthi')  \ : \  \mu(\uthi')>\mu(\uuth^*) \}.
\end{equation}

Further, under  certain  regularity conditions (cf. conditions  ``A1-A3'' therein)  regarding    
 the  estimates  $ {\huth}\,_i$ $ =\huth^n_i(X^i_1,\ldots,X^i_{T_{\pi}(n)})$ 
 of the parameters   $ {\uth}\,_i ,$ $f(.;.)$ and $\uTh_i$, 
 they showed that policies which, after 
 taking some small number of samples from each population, always choose the population $\pi^{0}(n)$  with the largest value of the population dependent index:
\begin{equation}\label{en:ism-bk}
 u_i(\huth^{\,n}_{\,i})=\sup_{\uthi'\in\uTh_i}\left\{\mu(\uthi ') \ : \  \mathbb{I}(\huth^{\,n}_{\,i},\uthi') < \frac{\ln n +o(\ln n)}{T_{\pi^0}^i(n)}\right\} .
\end{equation} 
are asymptotically efficient (or optimal), i.e.,  
\begin{equation}\label{en:ub-bk}
\limsup_n \frac{ R_{\pi^0}(n) }{ \ln n } \leq M_{\text{BK}}( \uth_1, \ldots,
\uth_K),  \ 
\mbox{ $\forall \uuth\in\uuTh$}.
\end{equation}

The index policy $\pi^0$ above, was a simplification of a UCB type policy first introduced  in  \cc{lai85} that  utilized forced actions.  Policies that satisfy the requirements of \eqqref{eqn:regret1}, 
 \eqqref{eqn:regret2}, and  
 \eqqref{en:ub-bk} were respectively called 
   \textbf{\textit{uniformly  consistent}} (UC), \textbf{\textit{uniformly fast convergent}} (UF), and 
\textbf{\textit{uniformly maximal convergence rate}} (UM) or simply   \textbf{\textit{asymptotically optimal}} (or asymptotically efficient).
%
%
The lower bound of \eqqref{en:lb-bk}  provides a baseline for comparison of the quality of policies and together with 
\eqqref{en:ub-bk} and \eqqref{en:lower-bound-bk} provide an alternative way to state the asymptotic  optimality of a policy $\pi^0$ as:
\begin{equation}\label{en:ao}
 R_{\pi^0}(n)   = M_{\text{BK}}(\uuth)  \ln n +o(\ln n) \, ,  \ 
\mbox{ $\forall \uuth\in\uuTh$}.
\end{equation}  
Policies that achieve this minimal 
asymptotic growth rate have been derived for specific parametric models in 
\cc{lai85},   \cc{bkmab96},  
\cc{honda2011asymptotically},   \cc{honda2010},  \cc{honda13}, \cc{chk2015} and references therein. 
In general it is not always easy to obtain such optimal polices, thus, policies that satisfy the
  less strict requirement  of 
\eqqref{eqn:regret2},  $\forall \uuth\in\uuTh,$ have been constructed, cf. 
\cc{Auer02b}, \cc{audibert2009exploration}, \cc{bubeck2012regret} and references therein. Such policies usually bound the regret as follows:
 \begin{equation}\label{en:a-bound}
 R_{\pi}(n) \le  M^0(\uuth) \ln n + M^1(\uuth) ,
\mbox{ for all $n$ and all  $\uuth$} , 
\end{equation}
where  $M^0(\uuth)$ is, often much, bigger than $M_{\text{BK}}(\uuth) , $ for all   $\uuth$. 

The results presented herein   can seem surprising, and it may appear to contradict (at least for $g(n)=\ln n$) the   classical   lower bound $M_{\text{BK}}(\uuth)$  of   $R_{\pi}(n) /\ln n  $ for UF policies $\pi$. For example,  if we take $F_i$ to be the normal 
distribution with unknown mean $\mu_i$ and unknown  variance $\sigma_i^2$, we have   
  for any UF policy $\pi$:
 $$\lim_n \frac{\mathbb{E}[\tilde{R}_{\pi}(n)]}{\ln n} \ge 
 \mathbb{M}_{\text{BK}}(\underline{\mu}, \underline{\sigma}^2)
 = \sum_{i:\mu_i \neq \mu^*} \frac{ 2 \Delta_i }{ \ln \left( 1 + \frac{ \Delta_i^2 }{\sigma_i^2} \right)} \ .$$
 

On the other hand we establish in the sequel that:
\begin{equation}
\begin{split}
\lim_n \frac{\tilde{R}_{\pi^F_g}(n)}{g(n)} & = C_{\pi^F_g}(\{ F_i \})  =\sum_{i: \mu_i \neq \mu^* } \Delta_i\ \mbox{ (a.s.)},\\
 \lim_n \frac{\tilde{R}_{\ismi} (n)}{g(n)} & = C_{\pi^O_g}(\{ F_i \})= K-1 \ \mbox{ (a.s.)}.
 \end{split}
 \end{equation}
However, no such contradiction exists: $M_{\text{BK}}(\uuth)$ limits the  
$\lim_n  \mathbb{E}[\tilde{R}_{\pi}(n)]/\ln n $  of a UF policy from below. In such contexts that $\pi^F_g$ or $\pi^O_g$ are UF, if such contexts exist, the above constants will be bounded from below by $M_{\text{BK}}(\uuth)$. In such contexts that $\pi^F_g$ or $\pi^O_g$ are not UF, the bound does not apply. In such instances, we may in fact conclude from the results presented herein, and   standard results relating modes of convergence, that for the  policies  constructed here, for $g(n) = O( \ln n )$, the sequences of random variables $\tilde{R}_{\pi^F_g}(n) / g(n)$,  $\tilde{R}_{\pi^O_g}(n) / g(n)$ are not uniformly integrable. An example as to how 
this can occur is given via the proof of Theorem 2 of \cc{chk2015}, where with 
 a non-trivial probability, non-representative initial sampling of each bandit biases expected future activations of sub-optimal bandits super-logarithmically. This effect does not influence the long term almost sure behavior of these policies.

\section{Main Theorems}\label{sec:framework}

We 
characterize a policy by the rate of growth of its pseudo-regret function
$\tilde{R}_\pi(n) $  with $n$ in the following way. 

\begin{definition}\label{def:1}
For a function $g(n)$,  a policy $\pi$ is  $g$-{\sl good} if for every set of bandit distributions $\{ F_i \} \subset \mathcal{F}$, there exists a constant $C_\pi( \{ F_i \} ) < \infty$ such that
\begin{equation}\label{eq:regr}
  \limsup_n \frac{\tilde{R}_\pi(n)}{g(n)} \leq C_\pi(\{ F_i \}) \  (a.s) \ as \  n \to \infty .
\end{equation}
\end{definition}

{\bf Remark 1:} Essentially, a policy is $g$-good if $\tilde{R}_\pi(n) \leq O(g(n))$  (a.s),    $n \to \infty.$ Trivially, policies exist that are $n$-good (i.e., $\tilde{R}_\pi(n) \leq O(n) \mbox{ (a.s.)}$), for example any policy that samples all populations at constant rate $1/K$.

We next state  the following theorem:

\begin{theorem}\label{thm:exist}
For $g$, an unbounded, positive, increasing, concave, differentiable, sub-linear function, there exist $g$-good policies.
\end{theorem}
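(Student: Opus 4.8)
The plan is to prove this constructively, by exhibiting two concrete policies — the $g$-Forcing ($\gfpi$) and the $g$-Inflated Sample Mean ($\ismi$) policies — and showing each is $g$-good. Since only existence is claimed here, it suffices to focus on one; I would lead with the $g$-Forcing policy, as it is conceptually the simplest and its analysis isolates the role of $g$ most transparently. The policy is: designate a sparse deterministic ``forcing'' schedule, a set of times $t$ at which each population $i$ is sampled regardless of current estimates, with the schedule arranged so that by time $n$ each population has been force-sampled on the order of $g(n)$ times; at all other times, act greedily, sampling $\argmax_i \bar X^i_{T^i_\pi(t-1)}$. The concavity, sublinearity, and differentiability of $g$ are exactly what is needed to build such a schedule consistently (the increments $g(t)-g(t-1)$ are nonincreasing and summable to $g(n)$, so one can allocate forced rounds at a density $\approx g'(t)$).

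The key steps, in order: (1) Make the forcing schedule precise and verify it is well-defined, with $T^i_{\gfpi}(n) \geq c\, g(n)$ for forced samples and $\sum_i (\text{forced samples by } n) = O(g(n))$ — this uses $g$ unbounded (so every population is sampled infinitely often, hence the SLLN applies per population) together with $g$ concave and sublinear (so the forced count is $O(g(n))$, not larger). (2) Invoke the SLLN, Eq.~\eqref{eqn:stln}: since each population is force-sampled infinitely often, $\bar X^i_{T^i(t)} \to \mu_i$ almost surely, simultaneously for all $i$ on a probability-one event. (3) On that event, there exists a (random, a.s.\ finite) time $N_0$ after which the greedy comparison always correctly identifies $i^*$: for $t > N_0$, $\bar X^{i^*}_{T^{i^*}(t)} > \bar X^i_{T^i(t)}$ for all $i \neq i^*$, so every \emph{non-forced} round after $N_0$ activates $i^*$. (4) Therefore for $n > N_0$, each suboptimal $i$ is sampled only on forced rounds plus the finitely many pre-$N_0$ rounds, giving $T^i_{\gfpi}(n) \leq (\text{forced count}) + N_0 = O(g(n)) + O(1)$. (5) Conclude $\tilde R_{\gfpi}(n) = \sum_i \Delta_i T^i_{\gfpi}(n) \leq \big(\sum_{i\neq i^*}\Delta_i\big)\cdot O(g(n)) + O(1)$, so $\limsup_n \tilde R_{\gfpi}(n)/g(n) \leq C(\{F_i\}) < \infty$ almost surely, which is precisely Definition~\ref{def:1}.

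The main obstacle — though it is more bookkeeping than deep — is step (1): designing the forcing schedule so that the cumulative number of forced samples of each population by time $n$ is genuinely of order $g(n)$ (bounded above \emph{and} below by constant multiples), using only the stated regularity hypotheses on $g$. One natural device is to force population $i$ at time $t$ whenever $\lfloor (g(t) - i/K)/? \rfloor$ increments, or more cleanly to let the forced times be $\{t : \exists i,\ T^i_{\text{forced}}(t-1) < g(t)/K\}$ and check this is self-consistent; concavity gives monotone increments and differentiability lets one compare sums to integrals of $g'$. The rest of the argument is the robust ``greedy eventually locks onto the best arm once estimates are accurate'' mechanism, and it leans only on the SLLN, exactly as advertised in the introduction; the LIL and finite-variance assumption are not needed for this existence statement (they enter only for the sharper remainder-term results in later theorems).
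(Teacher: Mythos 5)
Your proposal is correct in substance, but it proves the theorem with a different policy than the paper uses. The paper's $g$-Forcing policy is an \emph{adaptive threshold} rule: at time $t$ it forces the least-sampled arm whenever $\min_i T^i_{\pi}(t) < g(t)$, and plays the empirical winner otherwise; its analysis (Proposition \ref{thm:gfg-strong}, Appendix A) tracks the quantity $g(t)-\min_i T^i_\pi(t)$ through alternating ``catch-up'' and ``play-the-winner'' phases and ends up pinning $T^i_\pi(n)$ between $g(n)-2\delta$ and $\ceil{g(n)}$ for suboptimal $i$, which yields not just $g$-goodness but the exact constant $S_\Delta$ and the remainder bounds of Theorem \ref{thm:gfg-remainder}. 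You instead use a Robbins-style \emph{pre-specified deterministic forcing schedule} with forced rounds at density roughly $g'(t)$, and then run the same core mechanism the paper relies on: forcing makes every $T^i(t)\to\infty$, the SLLN traps all sample means in disjoint $\epsilon$-neighborhoods after an a.s.\ finite time, greedy rounds thereafter all go to the unique $i^*$, so suboptimal activations equal the forced count plus a finite remainder, giving $\tilde R_\pi(n)=O(g(n))$ a.s. Since Theorem \ref{thm:exist} asserts only existence, this coarser conclusion suffices, and your route buys simplicity: the deterministic schedule sidesteps the phase analysis entirely, at the cost of losing the sharp constant and remainder control the paper's construction delivers (and which it needs for Theorems \ref{thm:gfg}--\ref{thm:gfg-remainder}). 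The one place you should tighten is your step (1): you should state the schedule explicitly (e.g., force arm $i$ for the $m$-th time at the first free slot after $\min\{t: g(t)\ge m\}$) and note that because concavity plus sublinearity force $g'(t)\to 0$, scheduling collisions among the $K$ arms occur only finitely often, so each arm's forced count by time $n$ is $g(n)+O(1)$; that closes the only bookkeeping gap, and your observation that the LIL and finite variances are not needed for this statement matches the paper.
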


The proof of this theorem  is given  by example with  Theorems \ref{thm:gfg}, \ref{thm:uindex-unique}, which
demonstrate two $g$-good policies: the \gfp \ and the \ism \ policies. 

We note 
that in the sequel it will be assumed that any $g$ considered is an  {\sl unbounded, positive, increasing, concave, differentiable, sub-linear} function.

\subsection{A Class of $\mathbf g$-Forcing Policies}\label{sec:forcing}

Let $g$ be as hypothesized in Theorem \ref{thm:exist}. We define a \gfp \ policy $\pi^F_g$ in the following way:

\vspace{0.1cm}
\noindent
\begin{tcolorbox}[colback=blue!1, arc=1pt, width=.99\linewidth]
\textbf{\gfp \ policy:} 
A policy $\pi^F_g$ that  first samples  each bandit once,   then for $t \geq K$,

\begin{equation}\label{eqn:pi-g-1}
\pi^F_g(t+1) = 
\begin{cases} \arg\max_i \bar X^i_{T^i_{\pi^F_g}(t)} &\mbox{if }  \min_i T^i_{\pi^F_g}(t)  \ge  g(t),   \\ 
\arg\min_i T^i_{\pi^F_g}(t)  & \mbox{else}. \end{cases}
\end{equation}
\end{tcolorbox}
\noindent

Briefly, at any time, if any population has been sampled fewer than $g(t)$ times, sample it. Otherwise, sample from the population with the current highest sample mean. Ties are broken either uniformly at random, or at the discretion of the investigator. In this way, $g$ can be seen as determining the rate of exploration of currently sub-optimal bandits. This can be viewed as a variant on the policy $\pi_R$ considered in \cc{Rb52}.

It is convenient to define the following constant,
\begin{equation}
S_\Delta = \sum_{i: \mu_i \neq \mu^* } \Delta_i.
\end{equation}
The value $S_\Delta$ in some sense represents the pseudo-regret incurred each time the sub-optimal bandits are all activated once. The next result states that \gfp \ policies satisfy the conditions of Theorem \ref{thm:exist}.

\begin{theorem}\label{thm:gfg}
For a policy $\pi^F_g$ as in \eqref{eqn:pi-g-1}, $\pi^F_g$ is $g$-good, and
\begin{equation}
\mathbb{P} \left( \lim_n \frac{\tilde{R}_{\pi^F_g}(n)}{g(n)} = S_\Delta  \right) = 1.
\end{equation}
\end{theorem}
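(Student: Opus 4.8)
The plan is to pin down the asymptotics of the individual sampling counts $T^i_{\pi^F_g}(n)$ and then read off the pseudo‑regret from the identity $\tilde R_{\pi^F_g}(n)=\sum_{i:\mu_i\neq\mu^*}\Delta_i\,T^i_{\pi^F_g}(n)$. Abbreviate $T^i(n)=T^i_{\pi^F_g}(n)$ and $m(n)=\min_i T^i(n)$. It suffices to show that, almost surely, $T^i(n)=g(n)+o(g(n))$ for every sub‑optimal arm $i$ (one with $\Delta_i>0$); summing the finitely many such terms then gives $\tilde R_{\pi^F_g}(n)=S_\Delta\,g(n)+o(g(n))$ a.s., which simultaneously yields $g$‑goodness (with finite constant $S_\Delta$) and the claimed almost sure limit.

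The first and main step is a purely deterministic ``forcing keeps up'' lemma: there exist constants $t_1$ and $C_0$, depending only on $g$, with $m(n)\ge g(n)-C_0$ for all $n\ge t_1$ (so in particular $T^i(n)\to\infty$ for each $i$). The mechanism: since $g$ is concave and differentiable, $g'$ is non‑increasing, and since $g$ is sub‑linear we must have $g'(t)\to 0$ (otherwise $g(t)-g(t/2)\ge (t/2)g'(t)$ would force linear growth); hence beyond some $t_0$ the total increase of $g$ over any window of $K$ consecutive steps is strictly less than $1$. On the other hand, whenever the forcing condition $m(t)<g(t)$ holds the least‑sampled arm is incremented, and at most $K$ consecutive forcing steps suffice to raise every arm currently at the minimum level by one (each such step removes an arm from the minimum level until the level is cleared, so ties in $\arg\min_i$ are irrelevant). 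Tracking $\phi(t):=m(t)-g(t)$, one checks that $\phi$ decreases by at most $g'(t)$ per step while jumping up by nearly $1$ each time a level is cleared, so $\phi(t)$ cannot remain below $-C_0$ once $t$ is large; any extra increments of $m$ coming from the exploitation branch only help. This step uses only the regularity of $g$, not the distributional assumptions.

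The second step invokes the SLLN to control the exploitation branch. Since $m(n)\to\infty$, every $T^i(n)\to\infty$, hence $\bar X^i_{T^i(n)}\to\mu_i$ almost surely for each $i$; as $i^*$ is the unique maximizer there is an a.s.\ finite random time $N_1$ after which $\arg\max_i \bar X^i_{T^i(t)}=i^*$. Thus for $t\ge N_1$ every activation of a sub‑optimal arm $i$ must come from the forcing branch and so satisfies $T^i(t)=m(t)<g(t)$; evaluating this at the last activation of $i$ before time $n$ (and using that $g$ is increasing) yields $T^i(n)\le g(n)+C_1$ for some random constant $C_1$ and all large $n$. Combined with the deterministic lower bound this gives $g(n)-C_0\le T^i(n)\le g(n)+C_1$, so $T^i(n)=g(n)+o(g(n))$ almost surely for each sub‑optimal $i$, and the theorem follows by the summation above.

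I expect the deterministic forcing lemma to be the crux: the delicate point is to show that the minimum count trails $g(n)$ by only an $O(1)$ amount (so that it is absorbed into the $o(g(n))$ remainder) rather than by an amount growing with $n$, while correctly bookkeeping the interleaving of forcing and exploitation steps, ties in $\arg\min_i$, and the transient behaviour before $t_0$. The remaining steps — the SLLN argument for the exploitation branch and the final summation — are routine.
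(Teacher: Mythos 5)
Your proposal is correct and follows essentially the same route as the paper's proof (Proposition~\ref{thm:gfg-strong} in Appendix~\ref{apx:gfg-strong}): a deterministic analysis of the forcing mechanism via the quantity $g(t)-\min_i T^i_{\pi^F_g}(t)$ showing the minimum count trails $g$ by only $O(1)$, followed by the SLLN to confine sub-optimal activations to the forcing branch, and then summation over $\tilde R_{\pi^F_g}(n)=\sum_i \Delta_i T^i_{\pi^F_g}(n)$. The only difference is that your constants $C_0, C_1$ are coarser than the paper's $2\delta$ and $\lceil g(n)\rceil$ bounds, which the paper needs for the remainder Theorem~\ref{thm:gfg-remainder} but which are not required for Theorem~\ref{thm:gfg} itself.
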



The above theorem can be strengthened in the following way, bounding the asymptotic remainder terms almost surely:

\begin{theorem}\label{thm:gfg-remainder} For a policy $\pi^F_g$ as in \eqref{eqn:pi-g-1}, the 
following are true:
\begin{equation}
\mathbb{P} \left( \limsup_n\ \left(\tilde{R}_{\pi^F_g}(n) - S_\Delta g(n) \right) \leq S_\Delta  \right) = 1,
\end{equation}
and
\begin{equation}
\mathbb{P} \left( \liminf_n\ \left( \tilde{R}_{\pi^F_g}(n) - S_\Delta g(n) \right) \geq  0  \right) = 1.
\end{equation}
\end{theorem}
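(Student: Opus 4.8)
The plan is to reduce both statements to two-sided control of the suboptimal sampling counts. Writing $T^i(n):=T^i_{\pi^F_g}(n)$ and using $\tilde R_{\pi^F_g}(n)-S_\Delta g(n)=\sum_{i\neq i^*}\Delta_i\,(T^i(n)-g(n))$, it suffices to show, almost surely, that for every suboptimal $i$ one has $\limsup_n(T^i(n)-g(n))\le 1$, and that $\liminf_n(\min_j T^j(n)-g(n))\ge 0$; the two conclusions then follow by passing $\limsup$, resp.\ $\liminf$, through the finite sum (using $\Delta_i\ge 0$ and $T^i(n)\ge\min_j T^j(n)$). Before starting I would record two ingredients already implicit in the proof of Theorem~\ref{thm:gfg}: first, that (a.s.) every bandit is sampled infinitely often — so each $\bar X^i_{T^i(n)}\to\mu_i$ by the SLLN, and hence there is an a.s.\ finite time $N_0$ beyond which every exploitation step of $\pi^F_g$ returns the unique optimal arm $i^*$; second, that $g$ increasing, concave, differentiable and sub-linear forces $g'(t)\to 0$, hence $g(t+1)-g(t)\to 0$ and, by the tangent-line inequality for concave functions, $g(t)-g(s)\le g'(s)(t-s)$ for $s\le t$.

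For the upper bound, fix a suboptimal $i$ and, for large $n$, let $s=s(n)\le n$ be the last time $i$ is sampled. Since $i$ is sampled infinitely often, $s(n)\to\infty$, so eventually $s(n)-1\ge N_0$; then the sampling of $i$ at time $s$ cannot be an exploitation step (that would return $i^*\neq i$), so it is a forcing step, which requires $T^i(s-1)=\min_j T^j(s-1)<g(s-1)\le g(n)$. As $i$ is untouched on $(s,n]$, $T^i(n)=T^i(s-1)+1<g(n)+1$, giving $\limsup_n(T^i(n)-g(n))\le 1$ a.s.\ for each suboptimal $i$, which is the first assertion.

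For the lower bound, set $m(n):=\min_j T^j(n)$; I would use three structural facts: (i) $m(\cdot)$ is non-decreasing (forcing raises the current minimum, and after $N_0$ exploitation only increments $i^*$); (ii) along any maximal run of consecutive forcing steps $m$ increases by at least one per $K$ steps, since the forced arm always lies at the current minimum level, that level holds at most $K$ arms, and counts only increase, so it empties within $K$ forces; (iii) a forcing step occurs at $t+1$ iff $m(t)<g(t)$, so at every exploitation time $v$ one has $m(v)\ge m(v-1)\ge g(v-1)$. Suppose, for contradiction, $m(t_k)\le g(t_k)-\epsilon$ along some $t_k\to\infty$ with $\epsilon\in(0,1)$. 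Exploitation occurs infinitely often — otherwise forcing happens at every large step and (ii) forces $m(t)\gtrsim t/K$, contradicting $m(t)<g(t)=o(t)$ — so for large $k$ the last exploitation time $v_k\le t_k$ exists and $v_k\to\infty$. Writing $D_k=t_k-v_k=q_kK+r_k$ with $0\le r_k<K$, facts (i)--(iii) give $m(t_k)\ge g(v_k-1)+q_k$, while concavity gives $g(t_k)-g(v_k-1)\le g'(v_k-1)(D_k+1)$. For $k$ large $g'(v_k-1)<1/(2K)$; then if $q_k\ge 1$ we get $g'(v_k-1)(D_k+1)\le g'(v_k-1)K(q_k+1)\le (q_k+1)/2\le q_k$, so $m(t_k)\ge g(t_k)$; and if $q_k=0$ then $v_k-1\ge t_k-K$, so $m(t_k)\ge g(v_k-1)\ge g(t_k)-Kg'(t_k-K)$. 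In both cases $m(t_k)>g(t_k)-\epsilon$ for $k$ large, a contradiction; hence $\liminf_n(m(n)-g(n))\ge 0$ a.s., which gives the second assertion.

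The crux — and the step I expect to cost the most care — is the lower bound, specifically the interplay of (ii) and (iii): the minimum count cannot trail $g$ by a fixed positive margin because this is a race between the forcing rate (at least $1/K$ per step while $m<g$) and the increment rate of $g$ (at most $g'(t)$ per step), and one must track the integer/floor loss in (ii) carefully enough that it is absorbed — this is exactly why it matters that $g'(t)\to 0$ and not merely that $g'$ is bounded. The only other point needing attention, and one borrowable verbatim from the proof of Theorem~\ref{thm:gfg}, is the infinitely-often sampling of every arm, which simultaneously underwrites the SLLN convergence of all sample means, the existence of $N_0$, and $m(n)\to\infty$.
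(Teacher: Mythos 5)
Your proposal is correct, and it delivers exactly the two-sided control of the suboptimal counts that the paper's own argument rests on, so in substance it is the same approach: the paper routes both limits through Proposition \ref{thm:gfg-strong}, whose appendix proof establishes (deterministically) that $g(t)-2\delta\le T^i_{\pi^F_g}(t)$ via the spread $\Delta(t)=g(t)-\min_i T^i_{\pi^F_g}(t)$ staying below $2\delta$ after a finite time, and (via SLLN trapping) that $T^i_{\pi^F_g}(t)\le\ceil{g(t)}$ because suboptimal arms are eventually sampled only when forced. Where you differ is in packaging, and both of your local variants check out. For the upper bound, your ``last time $i$ was sampled it must have been a forcing step, hence $T^i(n)=T^i(s-1)+1<g(s-1)+1\le g(n)+1$'' observation is a tidy shortcut that sidesteps the paper's discussion of temporary overshoot and waiting for $g$ to catch up, and it yields the same effective bound as $\ceil{g(n)}$. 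For the lower bound, your contradiction argument from the last exploitation time, with the min count gaining at least one per $K$ consecutive forcing steps against $g$ gaining at most $g'(t)(D_k+1)$ with $g'(t)\to 0$, is the same race the paper runs when it shows each catch-up block of length at most $K$ shrinks $\Delta$ by at least $1-\delta$; your bookkeeping (the $q_k\ge 1$ versus $q_k=0$ cases, and the use of $g'(v_k-1)<1/(2K)$) is sound, and correctly it needs no probabilistic input, just as in the paper. The supporting facts you borrow (each arm sampled infinitely often, hence SLLN trapping and the existence of $N_0$; no ties with suboptimal arms after the trapping time because the means are strictly separated) are indeed available from the paper's appendix, so there is no gap.
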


\begin{proof}[Theorems \ref{thm:gfg} and \ref{thm:gfg-remainder}]
Theorems \ref{thm:gfg}, \ref{thm:gfg-remainder} follow immediately from the following proposition, the proof of which is given in Appendix \ref{apx:gfg-strong}:

\begin{proposition}\label{thm:gfg-strong}
For  policy $\pi^F_g$ as in \eqref{eqn:pi-g-1}, the following is true: For every $\epsilon > 0$, almost surely there exists a $N_\epsilon < \infty$ such that, for all $n \geq N_\epsilon$,
\begin{equation}
g(n) S_\Delta - \epsilon \leq \tilde{R}_{\pi^F_g}(n) \leq \ceil{ g(n) } S_\Delta.
\end{equation}
\end{proposition}

Using the above relation to bound first the limits as $n \to \infty$ of $\tilde{R}_{\pi^F_g}(n) / g(n)$, then $\tilde{R}_{\pi^F_g}(n) - S_\Delta g(n)$ (observing that $\ceil{ g(n) } - g(n) \leq 1$), give the desired results.
\end{proof}


Proposition \ref{thm:gfg-strong} is considerably stronger than Theorems \ref{thm:gfg}, \ref{thm:gfg-remainder}. However, it somewhat obscures the true nature of what is going on: for sufficiently large $n$, almost surely, sub-optimal bandits $(i: \mu_i \neq \mu^*)$ are {\sl only} activated during the ``forcing'' phase of the policy, when some activations are below $g$. As a result, since $g$ increases slowly (e.g. is sub-linearly), for large $n$, $T^i_{\pi^F_g}(n) = \ceil{ g(n) }$ - except for a discrepancy that occurs, for a brief stretch $(< K)$ of activations, whenever $g$ surpasses the next integer threshold. At this point, the policy raises the activations of each sub-optimal bandit, restoring the previous equality. Hence, in fact, equality holds in Proposition \ref{thm:gfg-strong} $(\tilde{R}_{\pi^F_g}(n) = \ceil{ g(n) } S_\Delta)$ for most large $n$. Discrepancy occurs increasingly rarely with $n$, based on the hypotheses on $g$. If, additionally, the controller specifies a deterministic scheme for tie-breaking, pseudo-regret may be determined explicitly for all sufficiently large $n$. Leaving ties to the discretion of the controller, Proposition \ref{thm:gfg-strong} is as strong a statement as can be made.

\subsection{A Class of g-Index Policies}\label{sec:indices}

In this section, we consider an index policy related to the classical ''UCB'' index policies. Let $g$ be as hypothesized. For each $i$, define an \textit{index} on $(j,k) \in \mathbb{Z}_+^2$,
\begin{equation}
u_i(j, k) = \bar X^i_k + \frac{g(j)}{k}.
\end{equation}

\vspace{0.1cm}
\noindent
\begin{tcolorbox}[colback=blue!1, arc=1pt, width=.99\linewidth]
\textbf{\ism \ policy:} 
A policy $\ismi$ that  first samples  each bandit once,   then for $t \geq K$,
\begin{equation}\label{eqn:pi-g-2}
\ismi (t+1) = \arg\max_i u_i(t, T^i_{\ismi}(t)) = \arg\max_i \left( \bar  X^i_{ T^i_{\ismi}(t) } + \frac{ g(t) }{ T^i_{\ismi}(t) } \right).
\end{equation}
\end{tcolorbox}

Briefly, at any time, the sample means of each bandit are ``inflated'' by the $g(t)/T^i_{\pi^O_g}(t)$ term, and the policy always activates the bandit with the largest inflated sample mean. When unsampled, a bandit's inflated sample mean increases essentially at rate $g$, hence $g$ drives the rate of exploration of current sub-optimal bandits. While this policy is inspired by more traditional ''Upper Confidence Bound'' policies, we refer to this as an Inflated Sample Mean policy, as it has no deliberate connection to confidence bounds. 

More general index policies of this type could also be considered, for instance based on an index $\bar{X}^i_k + H_i\left( g(j) / k \right)$ where $H_i$ is some positive, increasing function of its argument. This is more in line with the common UCB policies, which frequently have inflation terms of the form $O\left( \sqrt{ \ln n / T^i_\pi(n) } \right)$ (though this is hardly necessary, c.f. \cc{chk2015}) with $\ln n$ serving the ``exploration-driving'' role of $g$. However, introducing this extra $H_i$ function does not influence the order of the growth of pseudo-regret, it simply changes the relevant order constants, at the cost of complicating the analysis.

Theorem \ref{thm:uindex-unique} below shows that a \ism \ policy satisfies the conditions of Theorem \ref{thm:exist}, and gives the minimal order constant $C_{\pi^O_g}$ for this policy.

\begin{theorem}\label{thm:uindex-unique}
For a policy $\ismi$ as in \eqref{eqn:pi-g-2}, if the optimal bandit is unique,
\begin{equation}
\mathbb{P} \left( \lim_n \frac{\tilde{R}_{\ismi} (n)}{g(n)} = K-1  \right) = 1.
\end{equation}
\end{theorem}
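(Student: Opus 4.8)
The plan is to show that, almost surely, for all sufficiently large $n$ every sub-optimal bandit $i$ satisfies $T^i_{\pi^O_g}(n) = g(n)/\Delta_i + o(g(n))$, while the optimal bandit absorbs essentially all the remaining pulls; summing $\Delta_i T^i_{\pi^O_g}(n)$ over $i \ne i^*$ then gives $\tilde R_{\pi^O_g}(n) = (K-1)g(n) + o(g(n))$ a.s. The work lives entirely on the almost-sure event $\Omega_0$ on which the SLLN \eqref{eqn:stln} holds for every population, so fix a sample path in $\Omega_0$ and argue deterministically from there.

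The key steps, in order. First I would establish that each sub-optimal bandit is pulled infinitely often: if some $i$ were pulled only finitely often, its index $u_i(t, T^i(t)) = \bar X^i_{T^i(t)} + g(t)/T^i(t)$ would grow like $g(t) \to \infty$ (numerator frozen, denominator frozen), eventually exceeding the index of $i^*$, whose index is $\bar X^{i^*}_{T^{i^*}(t)} + g(t)/T^{i^*}(t) \to \mu^*$ along the subsequence where $T^{i^*}(t) \to \infty$ — contradiction. Hence $T^i(n) \to \infty$ for all $i$, so by SLLN $\bar X^i_{T^i(n)} \to \mu_i$ for all $i$. Second, I would pin down the pull count of a sub-optimal $i$. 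A pull of $i$ at time $t+1$ requires $u_i(t, T^i(t)) \ge u_{i^*}(t, T^{i^*}(t)) \ge \mu^* - o(1)$ (since $i^*$'s index is at least its sample mean, which converges to $\mu^*$), i.e. $\bar X^i_{T^i(t)} + g(t)/T^i(t) \ge \mu^* - o(1)$, which forces $g(t)/T^i(t) \ge \Delta_i - o(1)$, hence $T^i(t) \le g(t)/\Delta_i + o(g(t))$; using that $g$ is increasing and that between the last such pull and time $n$ the count only changes by a bounded amount relative to $g(n)$ (sub-linearity and slow growth of $g$ control the gap), one gets $T^i(n) \le g(n)/\Delta_i + o(g(n))$. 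For the matching lower bound, one shows $i$ cannot be starved: if at some time $T^i(t)$ were much smaller than $g(t)/\Delta_i$, then $u_i(t, T^i(t)) = \bar X^i_{T^i(t)} + g(t)/T^i(t) \ge \mu_i - o(1) + \Delta_i + \delta > \mu^* \ge u_{i^*}(t, T^{i^*}(t))$ eventually (here I use $T^{i^*}(t) \to \infty$ so $i^*$'s inflation term is $o(1)$), so $i$ would be pulled, restoring the count; iterating gives $T^i(n) \ge g(n)/\Delta_i - o(g(n))$. Third, combine: $T^i(n) = g(n)/\Delta_i + o(g(n))$ a.s. for each sub-optimal $i$, so $\Delta_i T^i(n) = g(n) + o(g(n))$, and summing over the $K-1$ sub-optimal bandits yields $\tilde R_{\pi^O_g}(n) = \sum_{i \ne i^*}\Delta_i T^i(n) = (K-1)g(n) + o(g(n))$ a.s., which gives both the $g$-good property and the stated limit.

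The main obstacle I expect is making the two-sided estimate on $T^i(n)$ genuinely rigorous rather than heuristic: controlling the gap between "the last time $i$ was pulled before $n$" and $n$ itself, and showing that the $o(1)$ error in $\bar X^i_{T^i(t)} - \mu_i$ and in $g(t)/T^{i^*}(t)$ translates into an $o(g(n))$ error in $T^i(n)$ uniformly. This is where the regularity hypotheses on $g$ (concave, differentiable, sub-linear, so that $g(t+c) - g(t) \to 0$ and $g(t)$ cannot jump) do the real work, and where one must be careful that the inflation term $g(t)/T^{i^*}(t)$ for the optimal bandit is indeed negligible — which needs a lower bound $T^{i^*}(t) \gtrsim t - (K-1)(g(t)/\min_i \Delta_i + o(g(t)))$, i.e. the optimal bandit takes a $1 - o(1)$ fraction of all pulls, itself a consequence of the upper bounds on the sub-optimal counts. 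A clean way to organize this is to prove the upper bounds on $T^i(n)$ for sub-optimal $i$ first, deduce $T^{i^*}(n) = n - o(n)$ and hence $g(n)/T^{i^*}(n) = o(1)$, and only then feed that back to get the lower bounds — a ``bootstrapping'' of the kind the authors advertise in the introduction.
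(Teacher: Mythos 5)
Your overall architecture is the same as the paper's: the paper proves exactly your two-sided estimate (its Propositions 2 and 3 give $T^i_{\pi^O_g}(n) \le g(n)/(\Delta_i - 2\epsilon) + C^i_\epsilon$ and $T^i_{\pi^O_g}(n) \ge g(n)/\bigl((1+\epsilon)(\Delta_i+2\epsilon)+2\epsilon\bigr)$ for sub-optimal $i$), and its lower bound is obtained by precisely the bootstrap you describe: upper bounds first, hence $T^{i^*}(n) \ge n - O(g(n))$, hence $u_{i^*}(n, T^{i^*}(n)) \to \mu^*$, then feed back. The upper-bound half of your plan is sound (the paper organizes it as a three-way counting decomposition plus a lemma that $u_{i^*}(j,k) < \mu^* - \epsilon$ only finitely often, which incidentally avoids your step 1; as written your step 1 is circular, since it compares against ``the index of $i^*$ along the subsequence where $T^{i^*}(t)\to\infty$'' before knowing $i^*$ is pulled infinitely often — fixable, e.g. by noting some bandit is pulled at least $t/K$ times, but not as stated).

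The genuine gap is in your lower bound: from ``$T^i(t)$ too small $\Rightarrow u_i(t,T^i(t)) > u_{i^*}(t,T^{i^*}(t))$'' you conclude ``so $i$ would be pulled, restoring the count.'' That inference is false: the policy pulls the argmax over all bandits, so another sub-optimal bandit $j$ with a still larger index may be pulled instead, and $T^i$ need not be restored; all you actually get is that $i^*$ is not pulled. Consequently the ``iterating'' step is unspecified, and your stated obstacle (the gap since the last pull of $i$) is the wrong gap — for the lower bound the relevant quantity is the time since $i^*$ last held the maximal index. The paper's repair, which is the crux of its Proposition 3, is to work along the (infinite) sequence $\{n_k\}$ of times at which $i^*$ attains the maximal index: at such times $u_i(n_k) \le u_{i^*}(n_k) \le \mu^* + \epsilon$ pins down $T^i(n_k)$ from below, the inter-return gaps satisfy $n_{k+1}-n_k \le O(g(n_{k+1}))$ because every pull in between goes to a sub-optimal bandit and the total sub-optimal budget is $O(g)$ by the upper bounds, and then concavity with $g'(t)\to 0$ gives $g(n_{k+1})/g(n_k) \le 1+\epsilon$, so the bound at $n_k$ propagates to every $n \in [n_k, n_{k+1}]$. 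With that device inserted, your plan closes and yields $\lim_n T^i(n)/g(n) = 1/\Delta_i$ a.s. and hence the stated limit $K-1$; without it, the matching lower bound — and therefore the equality in the theorem, as opposed to only $\limsup \le K-1$ — is not established.
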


The proof of this theorem depends on the following propositions, the proofs of which are given in Appendix \ref{apx:cub}. Interestingly, these results (and therefore Theorem \ref{thm:uindex-unique}) depend only on the assumption of the SLLN, not the LIL.

\begin{proposition}\label{prop1}
For each sub-optimal $i$, $\forall  \epsilon \in(0 , \Delta_i/2)$, \ase a finite constant $C^i_\epsilon$ such that for $n \geq K$,
\begin{equation}
T^i_{\ismi}(n) \leq \frac{ g(n) }{ \Delta_i - 2\epsilon } + C^i_\epsilon.
\end{equation}
\end{proposition}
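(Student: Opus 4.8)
Fix a sub-optimal $i$ and $\epsilon\in(0,\Delta_i/2)$, so $\Delta_i-2\epsilon>0$. Everything rests on a one-step comparison between $i$ and the (unique) optimal bandit $i^*$. By the SLLN applied to bandits $i$ and $i^*$, almost surely there is a finite random $m$ with $|\bar X^i_k-\mu_i|<\epsilon$ and $|\bar X^{i^*}_k-\mu^*|<\epsilon$ for all $k\geq m$. Suppose that at some time $t+1\geq K+1$ the policy activates $i$, and that at that instant $T^i_{\ismi}(t)\geq m$ and $T^{i^*}_{\ismi}(t)\geq m$. Since $\ismi(t+1)=i$, we have $u_i(t,T^i_{\ismi}(t))\geq u_{i^*}(t,T^{i^*}_{\ismi}(t))$; expanding both indices, using the two SLLN bounds, and using $g(t)>0$ gives $\mu_i+\epsilon+g(t)/T^i_{\ismi}(t)>\mu^*-\epsilon$, i.e.
\[
T^i_{\ismi}(t)<\frac{g(t)}{\Delta_i-2\epsilon},\qquad\text{hence}\qquad T^i_{\ismi}(t+1)=T^i_{\ismi}(t)+1<\frac{g(t+1)}{\Delta_i-2\epsilon}+1,
\]
where the last step uses that $g$ is increasing. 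This is the key estimate; the rest is bookkeeping to promote it to a bound valid for all $n\geq K$.

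The one genuine technical input is that $T^{i^*}_{\ismi}(n)\to\infty$ almost surely, so that the hypothesis ``$T^{i^*}_{\ismi}(t)\geq m$'' eventually holds. I would prove this by contradiction. If $T^{i^*}_{\ismi}$ stayed bounded, then $i^*$ is activated only finitely often, so eventually $T^{i^*}_{\ismi}(t)$ equals a fixed finite value $M'$ and $u_{i^*}(t,M')=\bar X^{i^*}_{M'}+g(t)/M'\to\infty$ because $g$ is unbounded. Then any bandit $j$ that is activated infinitely often would satisfy, along its activation times $t_\ell$ (for which $T^j_{\ismi}(t_\ell)\to\infty$), the inequality $\bar X^j_{T^j_{\ismi}(t_\ell)}+g(t_\ell)/T^j_{\ismi}(t_\ell)=u_j(t_\ell,T^j_{\ismi}(t_\ell))\geq u_{i^*}(t_\ell,M')$; since $\bar X^j_{T^j_{\ismi}(t_\ell)}\to\mu_j$ by the SLLN, this forces $g(t_\ell)/T^j_{\ismi}(t_\ell)\geq g(t_\ell)/M'-O(1)$, hence $T^j_{\ismi}(t_\ell)$ stays bounded --- contradicting $T^j_{\ismi}(t_\ell)\to\infty$. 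As some bandit must be activated infinitely often, this is a contradiction, so $T^{i^*}_{\ismi}(n)\to\infty$ a.s. (the finitely many SLLN exceptional null sets being discarded). I expect this step to be the main obstacle.

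Finally I would globalize. Let $N_0<\infty$ be a random time with $T^{i^*}_{\ismi}(t)\geq m$ for all $t\geq N_0$, set $\tilde N=\max(K,N_0)$, $a=T^i_{\ismi}(\tilde N)$, and $C^i_\epsilon=\max(m,a,1)$, a finite random constant. Fix $n\geq K$. If $n\leq\tilde N$ or $T^i_{\ismi}(n)\leq\max(m,a)$, then $T^i_{\ismi}(n)\leq C^i_\epsilon\leq g(n)/(\Delta_i-2\epsilon)+C^i_\epsilon$ since $g>0$. Otherwise $n>\tilde N$ and $T^i_{\ismi}(n)>\max(m,a)\geq T^i_{\ismi}(\tilde N)$, so $i$ is activated at some time in $\{\tilde N+1,\dots,n\}$; let $s$ be the last such time. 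Then $T^i_{\ismi}(n)=T^i_{\ismi}(s)=T^i_{\ismi}(s-1)+1$, where $T^i_{\ismi}(s-1)=T^i_{\ismi}(n)-1\geq m$ (using integrality and $T^i_{\ismi}(n)>\max(m,a)\geq m$) and $s-1\geq\tilde N\geq N_0$, so $T^{i^*}_{\ismi}(s-1)\geq m$. Applying the key estimate at $t=s-1$ and using that $g$ is increasing,
\[
T^i_{\ismi}(n)=T^i_{\ismi}(s-1)+1<\frac{g(s-1)}{\Delta_i-2\epsilon}+1\leq\frac{g(n)}{\Delta_i-2\epsilon}+C^i_\epsilon,
\]
which is the claimed bound.
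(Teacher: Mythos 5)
Your proof is correct, but it takes a genuinely different route from the paper's. The paper bounds $T^i_{\ismi}(n+1)$ by decomposing the activations of $i$ into three indicator sums — those occurring while (i) $u_i \geq \mu^* - \epsilon$ and $\bar X^i \leq \mu_i + \epsilon$, (ii) $u_i \geq \mu^* - \epsilon$ and $\bar X^i > \mu_i + \epsilon$, (iii) $u_i < \mu^* - \epsilon$ — bounding (i) by $g(n)/(\Delta_i - 2\epsilon) + 1$ through a counting argument, (ii) by an a.s.\ finite constant via the SLLN, and (iii) via Lemma \ref{lem1}, which shows $u_{i^*}(j,k) < \mu^* - \epsilon$ for only finitely many pairs $(j,k)$, irrespective of how $T^{i^*}_{\ismi}(t)$ evolves. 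You instead localize at the last activation of $i$ after both sample means are $\epsilon$-trapped, which forces $T^i_{\ismi} < g/(\Delta_i - 2\epsilon)$ at that instant, and then globalize; this is sound, but the price is that you must first establish $T^{i^*}_{\ismi}(n) \to \infty$ almost surely, which you correctly do by contradiction (a bounded $T^{i^*}$ would make $u_{i^*}$ blow up like $g(t)/M'$, pinning the count of any infinitely-often-activated arm near $M'$). The paper's Lemma \ref{lem1} is precisely the device that makes that extra step unnecessary: by covering all pairs $(j,k)$, including small $k$, it controls the ``optimal index too low'' events without any information about how often $i^*$ has been sampled. Both arguments use only the SLLN and deliver the bound for all $n \geq K$ with a random finite constant; the paper's decomposition is reused almost verbatim (with LIL refinements) in the proof of Proposition \ref{prop3}, whereas your version buys the intuitive ``last pull'' picture at the cost of the separate infinitely-often argument, and it too extends to the case of non-unique optimal bandits since only one fixed optimal arm enters the comparison.
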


\begin{proposition}\label{prop2}
For each sub-optimal $i \neq i^*$,    
 $\forall \epsilon \in(0,\min_{j \neq i^*} \Delta_j/2)$, \ase   some finite $N'$ such that for $n \geq N'$,
\begin{equation}
\frac{g(n)}{(1 + \epsilon)(\Delta_i + 2\epsilon) + 2 \epsilon} \leq T^i_{\ismi}(n).
\end{equation}
\end{proposition}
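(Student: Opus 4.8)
The plan is to use Proposition~\ref{prop1} to show that the optimal bandit is played on all but an $o(n)$ fraction of rounds, to combine this with the SLLN \eqref{eqn:stln} to pin down the limiting value of $i^*$'s index, and then to read off the lower bound from the selection rule: whenever $i^*$ is played its index must dominate that of bandit $i$, and keeping $u_i$ from exceeding $\mu^*$ forces $T^i_{\ismi}$ to be large.

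First, fix $\epsilon$ as in the statement; the hypothesis $\epsilon<\min_{j\neq i^*}\Delta_j/2$ makes Proposition~\ref{prop1} applicable to every sub-optimal $j$. Summing those bounds and using $T^{i^*}_{\ismi}(n)=n-\sum_{j\neq i^*}T^j_{\ismi}(n)$ gives, almost surely, $T^{i^*}_{\ismi}(n)\ge n-A\,g(n)-B$ for finite constants $A,B$; sub-linearity of $g$ then yields $T^{i^*}_{\ismi}(n)/n\to1$, so $T^{i^*}_{\ismi}(n)\to\infty$ and $g(n)/T^{i^*}_{\ismi}(n)\to0$. By \eqref{eqn:stln}, $\bar X^{i^*}_{T^{i^*}_{\ismi}(t)}\to\mu^*$, hence $u_{i^*}(t,T^{i^*}_{\ismi}(t))\to\mu^*$ and is in particular bounded. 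This also rules out any bandit $j$ whose count $T^j_{\ismi}$ is eventually constant, say at $k_j$: such a $j$ would satisfy $\max_i u_i(t,T^i_{\ismi}(t))\ge u_j(t,k_j)=\bar X^j_{k_j}+g(t)/k_j\to\infty$, contradicting the fact that $i^*$ — which is played infinitely often and achieves that maximum whenever played — keeps a bounded index. So $T^j_{\ismi}(t)\to\infty$ for all $j$, and \eqref{eqn:stln} gives $\bar X^j_{T^j_{\ismi}(t)}\to\mu_j$ a.s. for every $j$. Set $\epsilon'=\epsilon/3$; there is then a.s. a finite $N_1$ beyond which $u_{i^*}(t,T^{i^*}_{\ismi}(t))\le\mu^*+2\epsilon'$ and $\bar X^i_{T^i_{\ismi}(t)}\ge\mu_i-\epsilon'$.

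For the main step, fix a large $n$ and let $n'\le n$ be the last round at which $i^*$ is played (well-defined for $n$ large). Every round in $(n',n]$ is a sub-optimal activation, so $n-n'\le\sum_{j\neq i^*}T^j_{\ismi}(n)=O(g(n))=o(n)$; hence $n'\to\infty$ and $n'/n\to1$ as $n\to\infty$. Since $i^*$ attains the maximal index at round $n'$, $u_{i^*}(n'-1,T^{i^*}_{\ismi}(n'-1))\ge u_i(n'-1,T^i_{\ismi}(n'-1))$; for $n$ so large that $n'-1\ge N_1$, the left side is $\le\mu^*+2\epsilon'$ while the right side is $\ge\mu_i-\epsilon'+g(n'-1)/T^i_{\ismi}(n'-1)$, which rearranges to $T^i_{\ismi}(n'-1)\ge g(n'-1)/(\Delta_i+3\epsilon')=g(n'-1)/(\Delta_i+\epsilon)$. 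Finally, $T^i_{\ismi}(n)\ge T^i_{\ismi}(n'-1)$ by monotonicity, and concavity of $g$ together with $n'/n\to1$ gives $g(n'-1)\ge(1-o(1))g(n)$; since $(1+\epsilon)(\Delta_i+2\epsilon)+2\epsilon>\Delta_i+\epsilon$, for all sufficiently large $n$ we conclude $T^i_{\ismi}(n)\ge\frac{(1-o(1))g(n)}{\Delta_i+\epsilon}\ge\frac{g(n)}{(1+\epsilon)(\Delta_i+2\epsilon)+2\epsilon}$. Taking $N'$ to be the a.s.-finite round past which all of the above holds gives the claim.

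The main obstacle is that the bound naturally emerges at the last round $n'$ at which $i^*$ was played rather than at $n$ itself; the two ingredients that neutralize this are Proposition~\ref{prop1} (to control $n-n'$) and the concavity/sub-linearity of $g$ (to replace $g(n'-1)$ by $(1-o(1))g(n)$). The remaining work is bookkeeping: assembling the several almost-sure and ``eventually'' thresholds — from Proposition~\ref{prop1}, from the SLLN applied to each bandit, and from the regularity of $g$ — into a single finite $N'$.
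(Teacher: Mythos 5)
Your proof is correct, and it shares the paper's core ingredients: Proposition \ref{prop1} gives $T^{i^*}_{\ismi}(n)\ge n-O(g(n))$, hence $u_{i^*}(n,T^{i^*}_{\ismi}(n))\to\mu^*$ almost surely; every arm is then shown to be sampled infinitely often, so the SLLN traps each $\bar X^i_{T^i_{\ismi}(n)}$; and the lower bound on $T^i_{\ismi}$ is extracted from the domination $u_i\le u_{i^*}\le \mu^*+O(\epsilon)$ at times when $i^*$ is selected. Where you genuinely differ is the step that transfers this information from those selection times to an arbitrary large $n$. The paper bounds $\max_{n_k\le n\le n_{k+1}}u_i(n,T^i_{\ismi}(n))$ over each interval between consecutive selections of $i^*$ (Eq.\ \eqref{eqn:first-max}), which requires the ratio bound $g(n_{k+1})/g(n_k)\le 1+\epsilon$ derived from concavity, $g'\to 0$, and the count of sub-optimal activations in the gap (Eqs.\ \eqref{eqn:activation-bound}, \eqref{eqn:g-ratio}); the uniform index bound so obtained is then converted into the stated inequality at every $n$. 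You instead fix $n$, take $n'$ to be the last selection of $i^*$ up to $n$, read off $T^i_{\ismi}(n'-1)\ge g(n'-1)/(\Delta_i+\epsilon)$ directly from the comparison at time $n'-1$, and push it forward by monotonicity of $T^i_{\ismi}$ together with $g(n'-1)\ge(1-o(1))g(n)$ — which does hold, since $n-n'\le\sum_{j\ne i^*}T^j_{\ismi}(n)=O(g(n))=o(n)$ and, for positive concave $g$, $g(n')\ge (n'/n)\,g(n)$ (or, alternatively, $g(n)-g(n')\le (n-n')g'(n')$ with $g'\to0$); it would be worth writing out one of these justifications, as ``concavity plus $n'/n\to1$'' alone is terse. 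Your route is somewhat cleaner and in fact slightly sharper (your denominator is effectively $\Delta_i+\epsilon+o(1)$, which you then weaken to $(1+\epsilon)(\Delta_i+2\epsilon)+2\epsilon$ to match the statement), at the cost of not setting up the $\{n_k\}$ interval and $g$-ratio machinery that the paper reuses later in the proof of Proposition \ref{prop4}.
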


\begin{proof}[Theorem \ref{thm:uindex-unique}]
For each sub-optimal bandit $i$, as an application of Props. \ref{prop1}, \ref{prop2}, taking the limit of 
$T^i_{\ismi}(n)/g(n)$ first as $n \rightarrow \infty$, then as $\epsilon \rightarrow 0$, gives
 $\lim_n T^i_{\ismi}(n)/g(n) = 1/\Delta_i$, almost surely. The theorem then follows similarly, from the definition of pseudo-regret, Eq. \eqref{eqn:pseudo-regret}.
\end{proof}

{\bf Remark 2:} In the case that the optimal bandit is not unique, it happens that Prop. \ref{prop1} still holds. It can be shown then that $\pi^O_g$ remains $g$-good in this case, and has a limiting order constant of at most $K - K^*$ ($K^*$ as the number of optimal bandits). We leave as an open question, however, that of producing a Prop. \ref{prop2}-type lower bound and the verification of $K - K^*$ as the minimal order constant. The proof of Prop. \ref{prop2} for $K^* = 1$ depends on establishing a lower bound on the activations of the unique optimal bandit: in short, at time $n$, since the sub-optimal bandits are activated at most $O(g(n))$ times (which holds independent of $K^*$), it follows from its uniqueness that the optimal bandit is activated {\sl at least} $n - O(g(n))$ times. If, however, $ K^* > 1$ and the optimal bandit is not unique, while the optimal bandits must have been activated at least $n - O(g(n))$ {\sl in total} at time $n$, and the distribution of these activations among the optimal bandits is hard to pin down. Simple simulations seem to indicate a sort of ``phase change'', in that for $g$ of order greater than $\sqrt{ n \ln \ln n }$ all optimal bandits are sampled roughly equally often, while for $g$ of order less than $\sqrt{n \ln \ln n}$, the policy tends to fix on a single optimal bandit, sampling the other optimal bandits much more rarely in comparison.

We offer the following as a potential explanation of this observed effect (and justification of the difficult to observe $\ln \ln n$ term): Let us hypothesize, for the moment, that under any circumstances, the optimal bandits are activated linearly with time, that is for any optimal $i^*$, $T^{i^*}_{\pi^O_g}(n) = O(n)$, with the order coefficient depending on the specifics of that bandit. Under policy $\pi^O_g$, activations are governed by a comparison of indices. We consider then the fluctuations in value of the two terms of the index, the sample mean $\bar{X}^{i^*}_{T^{i^*}_{\pi^O_g}(n)}$ and the inflation term $g(n)/T^{i^*}_{\pi^O_g}(n)$. Under the assumption the optimal bandits are activated linearly, and reasonable assumptions on the bandit distributions (to grant the Law of the Iterated Logarithm), the fluctuations in the sample mean over time will be of order $O( \sqrt{\ln \ln n / n})$. The fluctuations in the inflation term will be of order $O(g(n)/n)$.  It would seem to follow then that for $g$ of order less than $O(\sqrt{ n \ln \ln n }),$ when comparing indices of optimal bandits, the sample mean is the dominant contribution to the index, while for $g$ of order greater than $O(\sqrt{n \ln \ln n})$, the inflation term is the dominant contribution to the index. When the inflation term dominates, among the optimal bandits an ``activate according to the largest index'' policy essentially reduces to a ``activate according to the smallest number of activations'' policy, which leads to equalization and all optimal bandits being activated roughly equally often. When the sample mean dominates, among the optimal bandits an ``activate according to the largest index'' policy essentially reduces to an ``activate according to the highest sample mean'' or ``play the winner'' policy, which leads to the policy fixing on certain bandits for long periods.

This explanation would additionally suggest that on one side of the phase change, when the inflation term dominates, the only properties of the optimal bandits that matter for the dynamics of the problem are their means, that they all have the optimal mean $\mu^*$. But on the other side of the phase change, when the sample mean dominates, other properties such as the variances $\{ \sigma^2_i \}$ influence the dynamics, through the Law of the Iterated Logarithm.
However at this point in time, this remains, while interesting, speculative.

Based on the above results, we have the following result: For each $i \neq i^* ,$  $\forall \epsilon > 0$, \ase some finite $N_\epsilon$ such that for $n \geq N_\epsilon$, 
\begin{equation}
\frac{1-\epsilon}{\Delta_i} g(n) \leq T^i_{\ismi}(n) \leq \frac{1+\epsilon}{\Delta_i} g(n).
\end{equation}
Similarly, for the optimal bandit $i^*$,
\begin{equation}
n - (1 + \epsilon) \sum_{i \neq i^*} \frac{1}{\Delta_i} g(n) \leq T^{i^*}_{\ismi}(n) \leq n - (1 - \epsilon) \sum_{i \neq i^*} \frac{1}{\Delta_i} g(n).
\end{equation}
It follows trivially from these that each bandit is activated infinitely often, i.e., almost surely $\{ T^i_{\ismi}(n) \}_{n \geq 1}$ 
is equivalent to the sequence $\{0, 1, \ldots\}$, though with some (finite) stretches of term repetition. It follows then, applying the LIL that
\begin{equation}
\mathbb{P} \left( \limsup_n \pm \frac{\bar X^i_{T^i_{\ismi}(n)} - \mu_i}{ \sqrt{ \ln \ln T^i_{\ismi}(n) / T^i_{\pi^O_g}(n) } } = \sigma_i \sqrt{2} \right) = 1.
\end{equation}
This provides greater control over the sample mean of each bandit than what the Strong Law of Large Numbers alone allows, and allows the results of the previous asymptotic results to be strengthened, as in the following theorem.

\begin{theorem}\label{thm:uindex-unique-remainder} For a policy $\pi^O_g$ as in \eqref{eqn:pi-g-2}, then the 
following are true:

{a)}  if    $g(n) = o( n / \ln \ln n )$,
\begin{equation}
\mathbb{P} \left( \limsup_n \frac{\tilde{R}_{\pi^O_g}(n) - (K-1)g(n)}{\sqrt{g(n) \ln \ln g(n)} } \leq 2 \sqrt{2} \sum_{i \neq i^*} \frac{\sigma_i}{\sqrt{\Delta_i}}  \right) = 1,
\end{equation}
{b)} if $g(n) = o( n^{2/3} )$,
\begin{equation}
\mathbb{P} \left( \liminf_n \frac{\tilde{R}_{\pi^O_g}(n) - (K-1)g(n)}{\sqrt{g(n) \ln \ln g(n)} } \geq  -3  \sqrt{2} \sum_{i \neq i^*} \frac{\sigma_i}{  \sqrt{ \Delta_i } }  \right) = 1.
\end{equation}
\end{theorem}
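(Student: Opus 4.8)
The plan is to upgrade the first-order count estimate $T^i_{\ismi}(n)\approx g(n)/\Delta_i$ to a second-order one by a bootstrapping argument: feed the already-established facts — the two-sided bound $\tfrac{1-\epsilon}{\Delta_i}g(n)\le T^i_{\ismi}(n)\le \tfrac{1+\epsilon}{\Delta_i}g(n)$ valid for all large $n$, and the LIL $\bar X^i_k=\mu_i+O(\sqrt{\ln\ln k/k})$ applied at the (infinite) activation counts of each bandit — back into the index comparison driving the policy. Since $\tilde R_{\pi^O_g}(n)-(K-1)g(n)=\sum_{i\neq i^*}\Delta_i\bigl(T^i_{\ismi}(n)-g(n)/\Delta_i\bigr)$, it suffices to sandwich each $T^i_{\ismi}(n)$ between $g(n)/\Delta_i$ and $g(n)/\Delta_i\pm O(\sqrt{g(n)\ln\ln g(n)})$ with the right constants, then multiply by $\Delta_i$, sum, and finally let $\epsilon\downarrow 0$. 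I will repeatedly use that $T^i_{\ismi}(n)=\Theta(g(n))$ gives $\ln\ln T^i_{\ismi}(n)=(1+o(1))\ln\ln g(n)$, so an LIL term at count $T^i_{\ismi}(n)$ equals $\sigma_i\sqrt{2\Delta_i}\,\sqrt{\ln\ln g(n)/g(n)}\,(1+o(1))$.

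For part (a) (an upper bound on the remainder, i.e.\ on each $T^i_{\ismi}(n)$), fix a sub-optimal $i$, write $m:=T^i_{\ismi}(n)$, and let $\tau\le n$ be the time of $i$'s $m$-th (hence last at or before $n$) activation. Just before that activation $i$ was the argmax, so in particular its index beat the optimal bandit's: $\bar X^i_{m-1}+g(\tau)/(m-1)\ge \bar X^{i^*}_{k^*}+g(\tau)/k^*\ge \bar X^{i^*}_{k^*}$, where $k^*=T^{i^*}_{\ismi}(\tau)$; crucially I discard the optimal bandit's positive inflation term $g(\tau)/k^*$, which is why part (a) needs only the weaker growth hypothesis. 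Using $g(\tau)\le g(n)$, the LIL lower bound on $\bar X^{i^*}_{k^*}$ (negligible here, see below), and the LIL upper bound on $\bar X^i_{m-1}$, this rearranges to $g(n)/(m-1)\ge \Delta_i-O(\sqrt{\ln\ln g(n)/g(n)})$; inverting $x\mapsto g(n)/(\Delta_i-x)$ to first order gives $T^i_{\ismi}(n)\le g(n)/\Delta_i+O(\sqrt{g(n)\ln\ln g(n)})$. Tracking constants through the LIL and this inversion, multiplying by $\Delta_i$ and summing over $i\neq i^*$ produces the stated $\limsup$.

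For part (b) (a lower bound on the remainder, i.e.\ on each $T^i_{\ismi}(n)$), I argue by contradiction. Since $T^i_{\ismi}(n)=m$, bandit $i$ is not activated on the final block of steps up to $n$, so $\bar X^i_m+g(n-1)/m=u_i(n-1,m)\le \max_j u_j(n-1,T^j_{\ismi}(n-1))$. Bound the right side from above using, for each sub-optimal $j$, $g(n-1)/T^j_{\ismi}(n-1)\le \Delta_j/(1-\epsilon)$ together with the LIL upper bound on $\bar X^j$, and, for the optimal bandit, $g(n-1)/T^{i^*}_{\ismi}(n-1)\le g(n)/(n-O(g(n)))=O(g(n)/n)$; this gives $\max_j u_j\le \mu^*+O(\sqrt{\ln\ln g(n)/g(n)})$ \emph{provided} $g(n)/n$ is negligible against $\sqrt{\ln\ln g(n)/g(n)}$, which is exactly the requirement $g(n)=o(n^{2/3})$ (up to the $\ln\ln$ factor). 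Bounding the left side from below by the LIL, this inequality fails if $m<g(n)/\Delta_i-c\sqrt{g(n)\ln\ln g(n)}$ for the appropriate $c$, a contradiction; hence $T^i_{\ismi}(n)\ge g(n)/\Delta_i-O(\sqrt{g(n)\ln\ln g(n)})$, and again one multiplies by $\Delta_i$ and sums.

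The main obstacle is keeping the optimal bandit's contributions out of the order constants, as the statement demands. In part (b) the culprit is the inflation term $g(n)/T^{i^*}_{\ismi}(n)\asymp g(n)/n$, disposed of precisely by the hypothesis $g(n)=o(n^{2/3})$. In part (a) the subtle point is the optimal bandit's own sample-mean fluctuation $\sigma_{i^*}\sqrt{2\ln\ln k^*/k^*}$ at count $k^*=T^{i^*}_{\ismi}(\tau)$: one must check it is $o(\sqrt{\ln\ln g(n)/g(n)})$, and this rests on showing $\tau$ is much larger than $g(n)$ — which follows because $g(\tau)=\Theta(g(n))$ (from $T^i_{\ismi}(\tau)=m=\Theta(g(n))$ and the first-order bounds) while $g$ is sublinear, forcing $\tau/g(n)\to\infty$ and hence $k^*=\tau-O(g(\tau))\gg g(n)$ — together with the growth hypothesis on $g$. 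A further source of slack is that the maximizing competing index may belong to another sub-optimal bandit rather than to $i^*$, so its LIL fluctuation must be carried as well; this, combined with the first-order inversion of $g(n)/(\Delta_i-\text{error})$, is what yields the constants $2\sqrt2$ and $3\sqrt2$, which are almost certainly not sharp. The routine off-by-one bookkeeping (between $m$ and $m-1$, and between times $n$, $n-1$, and $\tau$) is absorbed because $g$ is slowly varying.
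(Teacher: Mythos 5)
Your part (a) is essentially sound and takes a genuinely different route from the paper: instead of the paper's counting decomposition of $T^i_{\ismi}(n)$ into three sums with a shrinking threshold $h_i(t)\asymp\sqrt{\ln\ln g(t)/g(t)}$, you read the index inequality $u_i\geq u_{i^*}$ off at the last activation time $\tau$ of bandit $i$, discard the optimal bandit's inflation term, and invert. Your handling of the optimal bandit's fluctuation (via $g(\tau)=\Theta(g(n))$, sublinearity forcing $\tau/g(n)\to\infty$, and the hypothesis $g=o(n/\ln\ln n)$ applied at $\tau$) is the right chain, and the conversion $\ln\ln(m-1)/(m-1)\leq(1+o(1))\tfrac{\Delta_i}{(1-\epsilon)}\ln\ln g(n)/g(n)$ uses only the already-available first-order bounds, so there is no circularity. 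If anything, carried out carefully your single-comparison argument gives the per-bandit constant $\sigma_i\sqrt{2}/\sqrt{\Delta_i}$ rather than the paper's $2\sigma_i\sqrt{2}/\sqrt{\Delta_i}$ (the paper pays a factor $2$ because its threshold enters the denominator as $\Delta_i-2h_i$), which is stronger than, and hence still proves, the stated one-sided bound.

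Part (b), however, has a genuine gap. The inequality $u_i(n-1,m)\leq\max_j u_j(n-1,T^j_{\ismi}(n-1))$ is vacuous ($u_i$ is itself one of the terms in the maximum), so all the content must come from bounding the winning competitor's index by $\mu^*+O(\sqrt{\ln\ln g(n)/g(n)})$. That works when the winner is $i^*$, but when the winner is another sub-optimal bandit $j$ your bound $g(n-1)/T^j_{\ismi}(n-1)\leq\Delta_j/(1-\epsilon)$ yields only $u_j\leq\mu^*+\tfrac{\epsilon}{1-\epsilon}\Delta_j+O(\sqrt{\ln\ln g(n)/g(n)})$: the error is a fixed constant for fixed $\epsilon$, not $o(1)$ at the $\sqrt{\ln\ln g/g}$ scale. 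Propagating it gives only $\Delta_i T^i_{\ismi}(n)-g(n)\geq -c\,\epsilon\,g(n)-O(\sqrt{g(n)\ln\ln g(n)})$, so after dividing by $\sqrt{g(n)\ln\ln g(n)}$ the liminf is $-\infty$ for every fixed $\epsilon$, and letting $\epsilon\to 0$ afterwards cannot recover the claim. Sharpening the bound on $u_j$ would require precisely the part (b) lower bound for $T^j_{\ismi}$, i.e.\ the argument is circular as stated. The paper's proof avoids this by comparing only along the subsequence $\{n_k\}$ of times at which $i^*$ itself has the maximal index (so the competing index is pinned to $\mu^*+O(\sqrt{\ln\ln n_k/n_k})+O(g(n_k)/n_k)$), and then transports the bound to intermediate times by showing $g(n_{k+1})/g(n_k)\to 1$, using concavity of $g$, $g'\to 0$, and the $O(g(n))$ bound on total sub-optimal activations; this interval-control device (or some genuine simultaneous induction over all sub-optimal bandits at a putative first failure time) is exactly what your argument is missing, and it is also where the extra fluctuation term producing the constant $3\sqrt{2}$ actually enters.
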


In short, we have that for a \ism\ policy $\ismi$, $$\tilde{R}_{\pi^O_g}(n) = (K-1)g(n) + O\left( \sqrt{ g(n) \ln \ln g(n) } \right).$$  It should be observed that, unlike previous results, this theorem is somewhat restrictive in its allowed $g$. However, since the focus is traditionally on logarithmic regret, i.e., $g(n) = O(\ln n)$, it is clear that the above restrictions are nothing serious.

This theorem follows trivially from the following refinements of Props. \ref{prop1}, \ref{prop2}, and the definition of pseudo-regret, Eq. \eqref{eqn:pseudo-regret}. Their proofs are given in Appendix \ref{apx:lil}.

\begin{proposition}\label{prop3}
If $g(n) = o(n / \ln \ln n )$, for each sub-optimal $i \neq i^*$, the following holds almost surely:
\begin{equation}
\limsup_n \frac{ \Delta_i T^i_{\ismi}(n) - g(n)}{ \sqrt{ g(n) \ln \ln g(n) }}  \leq \frac{2 \sigma_i \sqrt{2} }{\sqrt{\Delta_i}}.
\end{equation}
\end{proposition}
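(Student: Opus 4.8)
The plan is to re-run the proof of Proposition~\ref{prop1} with the LIL~\eqref{eqn:lil} replacing the role the SLLN played there, using as a ``bootstrap'' the two-sided control $T^i_{\ismi}(n)=(1+o(1))g(n)/\Delta_i$ that follows from Propositions~\ref{prop1} and~\ref{prop2}. Fix a sub-optimal $i\neq i^*$, and for each large $n$ let $\tau=\tau(n)\le n$ be the last time $\le n$ at which $i$ is activated; this is well defined, and $\tau(n)\to\infty$ since $i$ is activated infinitely often and $T^i_{\ismi}(n)\to\infty$. Put $k:=T^i_{\ismi}(n)$, so $T^i_{\ismi}(\tau-1)=k-1$. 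Since $i$ has the largest index at time $\tau$, comparing with $i^*$ and discarding the nonnegative inflation term of $i^*$,
\[
\bar X^i_{k-1}+\frac{g(\tau-1)}{k-1}\ \ge\ \bar X^{i^*}_{T^{i^*}_{\ismi}(\tau-1)} .
\]
Writing $\delta_i:=\bar X^i_{k-1}-\mu_i$ and $\delta^*:=\mu^*-\bar X^{i^*}_{T^{i^*}_{\ismi}(\tau-1)}$, and using $g$ increasing with $\tau-1\le n$, this rearranges (for $n$ large, so that $\Delta_i-\delta_i-\delta^*>0$) to $k-1\le g(n)/(\Delta_i-\delta_i-\delta^*)$, and hence, via $\tfrac1{1-x}\le1+2x$ on $[0,\tfrac12]$,
\[
\Delta_i\,T^i_{\ismi}(n)-g(n)\ \le\ \Delta_i+\frac{2\,g(n)\,(\delta_i+\delta^*)}{\Delta_i}.
\]
It remains to bound the random deviations $\delta_i$ and $\delta^*$.

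For $\delta_i$: by the two-sided bound $k-1=(1+o(1))g(n)/\Delta_i\to\infty$, so the upper half of~\eqref{eqn:lil} (valid along a deterministic index sequence, hence applicable at the random index $k-1$) gives, for every fixed $\eta>0$ and all large $n$,
\[
\delta_i\ \le\ (\sigma_i\sqrt{2}+\eta)\sqrt{\frac{\ln\ln(k-1)}{k-1}}\ \le\ (\sigma_i\sqrt{2}+\eta)(1+o(1))\sqrt{\frac{\Delta_i\,\ln\ln g(n)}{g(n)}} .
\]
Inserting this into the bound for $\Delta_iT^i_{\ismi}(n)-g(n)$ and dividing by $\sqrt{g(n)\ln\ln g(n)}$, the $\Delta_i$ term vanishes and the $\delta_i$ term tends to $2(\sigma_i\sqrt{2}+\eta)/\sqrt{\Delta_i}$ as $n\to\infty$; letting $\eta\downarrow0$ along a countable sequence then yields $\limsup_n(\Delta_iT^i_{\ismi}(n)-g(n))/\sqrt{g(n)\ln\ln g(n)}\le 2\sigma_i\sqrt{2}/\sqrt{\Delta_i}$ almost surely, provided the $\delta^*$ term also vanishes, i.e. provided $\sqrt{g(n)}\,\delta^*=o(\sqrt{\ln\ln g(n)})$. (The factor $2$ is exactly the cost of $\tfrac1{1-x}\le1+2x$.)

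For $\delta^*$: since $\tau(n)\to\infty$, Proposition~\ref{prop1} applied at time $\tau-1$ gives $\sum_{j\neq i^*}T^j_{\ismi}(\tau-1)=O(g(\tau-1))=o(\tau-1)$ by sublinearity of $g$, whence $T^{i^*}_{\ismi}(\tau-1)=(\tau-1)-\sum_{j\neq i^*}T^j_{\ismi}(\tau-1)=(1-o(1))(\tau-1)\to\infty$; in particular $\delta_i,\delta^*\to0$. Feeding $\delta_i,\delta^*\to0$ back into the index inequality at time $\tau$ gives $g(\tau-1)\ge(k-1)(\Delta_i-o(1))$, which combined with $k\ge(1-\epsilon)g(n)/\Delta_i$ (Proposition~\ref{prop2}, any fixed $\epsilon>0$, $n$ large) yields $g(\tau-1)\ge c\,g(n)$ for a fixed $c\in(0,1)$ and all large $n$. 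Consequently $\tau-1\ge g^{-1}(c\,g(n))=:m$, and since $t\mapsto\ln\ln t/t$ is eventually decreasing and $T^{i^*}_{\ismi}(\tau-1)=(1-o(1))(\tau-1)$, the lower half of~\eqref{eqn:lil} gives $\delta^*\le(\sigma_{i^*}\sqrt{2}+\eta)(1+o(1))\sqrt{\ln\ln m/m}$. Writing $g(m)=c\,g(n)$ and using $g$ sublinear once more, the hypothesis $g(n)=o(n/\ln\ln n)$, equivalently $g(m)=o(m/\ln\ln m)$, gives
\[
\frac{g(n)\,(\delta^*)^2}{\ln\ln g(n)}\ \lesssim\ \frac{g(n)\,\ln\ln m}{m\,\ln\ln g(n)}\ \asymp\ \frac{g(m)\,\ln\ln m}{m}\cdot\frac{1}{\ln\ln g(m)}\ \longrightarrow\ 0 ,
\]
which is precisely $\sqrt{g(n)}\,\delta^*=o(\sqrt{\ln\ln g(n)})$, completing the sketch.

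The step I expect to be the genuine obstacle is this control of $\delta^*$: one must pin down the random last-activation time $\tau(n)$ precisely enough --- first $g(\tau-1)\asymp g(n)$ (a short bootstrap off Propositions~\ref{prop1}--\ref{prop2}) and then, crucially, convert this into a lower bound on $\tau-1$ itself through $g^{-1}$ --- so that the optimal bandit's sample-mean fluctuation at $T^{i^*}_{\ismi}(\tau-1)$ samples has strictly smaller order than the main term $\sqrt{g(n)\ln\ln g(n)}$; this is exactly where the hypothesis $g(n)=o(n/\ln\ln n)$ is consumed. The remaining estimates are routine once Propositions~\ref{prop1} and~\ref{prop2} are available.
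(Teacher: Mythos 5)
Your argument is correct (up to routine sketch-level care: handling the cases where the deviations are negative, and carrying the fixed-$\epsilon$ factors from Propositions~\ref{prop1}--\ref{prop2} through to the final limit rather than absorbing them into $1+o(1)$), but it takes a genuinely different route from the paper. The paper's proof re-runs the three-indicator counting decomposition used for Proposition~\ref{prop1}, with the fixed tolerance $\epsilon$ replaced by a shrinking threshold $h_i(t) \asymp \sqrt{\ln\ln g(t)/g(t)}$: the analogue of $n^i_1$ yields the main bound $g(n)/(\Delta_i - 2h_i(n)) + 1$, while the analogues of $n^i_2, n^i_3$ are shown to be almost surely finite via LIL bounds evaluated at the \emph{current} time $t$, where $T^{i^*}_{\pi^O_g}(t) \geq t/2$ makes the optimal arm's fluctuation $O(\sqrt{\ln\ln t/t})$, and the hypothesis $g(n) = o(n/\ln\ln n)$ is consumed precisely in showing this is eventually below $h_i(t)$. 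You instead compare indices only at the last activation time $\tau(n)$ of arm $i$, which forces you to control the optimal arm's sample mean at the possibly much earlier time $\tau-1$; your bootstrap $g(\tau-1) \geq c\,g(n)$, hence $\tau - 1 \geq g^{-1}(c\,g(n))$, is exactly the step the paper's decomposition sidesteps, and it is where your argument consumes $g(n) = o(n/\ln\ln n)$ --- the two uses of the hypothesis are precisely analogous, and your treatment of it is sound. What your route buys: it is shorter, needs no auxiliary counts, and with the exact expansion $1/(1-x) = 1 + x + O(x^2)$ it would in fact give the sharper constant $\sigma_i\sqrt{2}/\sqrt{\Delta_i}$; your factor $2$ is deliberately re-introduced via $1/(1-x) \leq 1+2x$ to match the statement, whereas in the paper the factor $2$ arises from using the same tolerance $h_i$ both for arm $i$'s upward fluctuation and for the index threshold. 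What the paper's route buys: the optimal-arm control comes for free at the current time (no inverse of $g$ or last-pull bookkeeping needed), and the skeleton is reused verbatim from the proof of Proposition~\ref{prop1}, keeping all constants explicit along the way.
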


\begin{proposition}\label{prop4}
If $g(n) = o(n^{2/3})$, for each sub-optimal $i \neq i^*$, the following holds almost surely:
\begin{equation}
\liminf_n \frac{ \Delta_i T^i_{\ismi}(n) - g(n)}{ \sqrt{ g(n) \ln \ln g(n) } } \geq  -\frac{ 3\sigma_i \sqrt{2}}{  \sqrt{ \Delta_i } }.
\end{equation}
\end{proposition}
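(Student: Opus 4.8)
This is the lower-bound companion of Proposition \ref{prop2}, sharpened to the $\sqrt{g(n)\ln\ln g(n)}$ scale. The plan is a \emph{bootstrap}: I would feed the already-established first-order conclusions (Proposition \ref{prop1}, Proposition \ref{prop2}, and the two-sided estimates displayed after Remark 2, which pin $T^i_{\ismi}(n)$ to within $o(g(n))$ of $g(n)/\Delta_i$) into a second pass that now also uses the LIL along each bandit's sampling sequence. Fix a sub-optimal $i\neq i^*$ and $\epsilon>0$, and work on the almost sure event on which, simultaneously: (i) $(1-\epsilon)g(n)/\Delta_j\le T^j_{\ismi}(n)\le(1+\epsilon)g(n)/\Delta_j$ for every sub-optimal $j$ and all large $n$; (ii) $\sum_{j\neq i^*}T^j_{\ismi}(n)\le C\,g(n)$ for all $n$ and some finite $C$ (from Proposition \ref{prop1}), hence $T^{i^*}_{\ismi}(n)\ge n-C\,g(n)=n(1-o(1))$; and (iii) the LIL holds, so for every large $k$, $|\bar X^j_k-\mu_j|\le(1+\epsilon)\sigma_j\sqrt2\,\sqrt{\ln\ln k/k}$. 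It then suffices to show on this event that $\Delta_i T^i_{\ismi}(n)\ge g(n)-(1+o(1))(\sigma_i\sqrt2/\sqrt{\Delta_i})\sqrt{g(n)\ln\ln g(n)}-o(\sqrt{g(n)\ln\ln g(n)})$ for all large $n$, and then let $\epsilon\to0$; the constant $3\sigma_i\sqrt2/\sqrt{\Delta_i}$ in the statement leaves comfortable room for the accumulated $\epsilon$-slack (and for the blunter handling of the competing bandits if one does not route the argument through a recent optimal activation).

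The key step evaluates the policy's index comparison at a recent activation of the optimal bandit. For large $n$, let $s=s(n)$ be the largest time $\le n$ with $\ismi(s)=i^*$; since every activation in $\{s+1,\dots,n\}$ is sub-optimal, $n-s\le\sum_{j\neq i^*}T^j_{\ismi}(n)\le C\,g(n)$ by (ii). Because $\ismi(s)=i^*$, we have $u_{i^*}\bigl(s-1,T^{i^*}_{\ismi}(s-1)\bigr)\ge u_i\bigl(s-1,T^i_{\ismi}(s-1)\bigr)$. Since $T^{i^*}_{\ismi}(s-1)=n(1-o(1))$ is of order $n$, the inflation term $g(s-1)/T^{i^*}_{\ismi}(s-1)$ is $O(g(n)/n)$, and by (iii) $\bar X^{i^*}_{T^{i^*}_{\ismi}(s-1)}\le\mu^*+O(\sqrt{\ln\ln n/n})$, so $u_{i^*}(s-1,\cdot)\le\mu^*+\eta(n)$ with $\eta(n)=O(g(n)/n+\sqrt{\ln\ln n/n})$. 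Writing $m=T^i_{\ismi}(s-1)$ and using $u_i(s-1,m)=\bar X^i_m+g(s-1)/m\ge\mu_i-|\bar X^i_m-\mu_i|+g(s-1)/m$, this gives $g(s-1)/m\le\Delta_i+|\bar X^i_m-\mu_i|+\eta(n)$, hence $T^i_{\ismi}(n)\ge m\ge g(s-1)\big/\bigl(\Delta_i+|\bar X^i_m-\mu_i|+\eta(n)\bigr)$.

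It remains to quantify the right-hand side using the coarse bound on $m$. By (i), $m\in[(1-\epsilon)g(s-1)/\Delta_i,(1+\epsilon)g(s-1)/\Delta_i]$, so $\sqrt{\ln\ln m/m}\le(1+o(1))\sqrt{\Delta_i\ln\ln g(n)/g(n)}$, and by (iii) $|\bar X^i_m-\mu_i|\le(1+o(1))\sigma_i\sqrt2\,\sqrt{\Delta_i\ln\ln g(n)/g(n)}$. Expanding $1/(\Delta_i+x)=(1-x/\Delta_i+\dots)/\Delta_i$ in the small quantity $x=|\bar X^i_m-\mu_i|+\eta(n)$ and using $g(s-1)\le g(n)$ yields $\Delta_i T^i_{\ismi}(n)\ge g(s-1)-(1+o(1))(\sigma_i\sqrt2/\sqrt{\Delta_i})\sqrt{g(n)\ln\ln g(n)}-(g(n)/\Delta_i)\eta(n)$. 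Since $g(n)-g(s-1)=o(\sqrt{g(n)\ln\ln g(n)})$ and $(g(n)/\Delta_i)\eta(n)=o(\sqrt{g(n)\ln\ln g(n)})$ (the two checks discussed below), this gives the desired estimate, and letting $\epsilon\to0$ proves Proposition \ref{prop4}; dividing through by $\sqrt{g(n)\ln\ln g(n)}$ and summing $\Delta_i T^i_{\ismi}(n)$ over sub-optimal $i$ then reproduces part (b) of Theorem \ref{thm:uindex-unique-remainder}.

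The main obstacle — and the only place the extra hypothesis $g(n)=o(n^{2/3})$ is needed, in contrast to the purely SLLN-based first-order results — is verifying that the two remainder-of-the-remainder terms are genuinely $o(\sqrt{g(n)\ln\ln g(n)})$. For the first, $\eta(n)=O(g(n)/n+\sqrt{\ln\ln n/n})$ enters multiplied by $g(n)/\Delta_i$, so one needs $g(n)^2/n=o(\sqrt{g(n)\ln\ln g(n)})$, i.e. $g(n)^{3/2}=o(n\sqrt{\ln\ln g(n)})$, which is precisely $g(n)=o(n^{2/3})$ (the cross term $g(n)\sqrt{\ln\ln n/n}$ is then automatically negligible since $\ln\ln n\ll n^{1/3}$). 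For the second, $g(n)-g(s-1)$ is controlled by $n-s\le C\,g(n)$ together with concavity of $g$: $g(n)-g(s-1)\le(n-s+1)\,g'(s-1)=O\bigl(g(n)\cdot g(n)/n\bigr)=O(g(n)^2/n)$, again tamed by $g(n)=o(n^{2/3})$. I would isolate the bootstrap inputs (Proposition \ref{prop1} for the window bound $n-s\le C g(n)$, and the two-sided first-order estimate for the range of $m$) cleanly before invoking the LIL, since the self-consistency of evaluating $\ln\ln m/m$ at $m\asymp g(n)/\Delta_i$ is exactly where the layering of the argument pays off.
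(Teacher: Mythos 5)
Your argument is correct, and it reaches the stated bound by a genuinely different decomposition than the paper's. The paper also anchors to the activation times of $i^*$, but it works over whole inter-activation windows $[n_k,n_{k+1}]$: it bounds $\max_{n_k\le n\le n_{k+1}}u_i(n)$ through the window-maximum of the sample mean $M^i_k$ and through $\bar X^i_{T^i_\pi(n_k)}$, so three LIL-sized fluctuations of bandit $i$ accumulate (the terms $A_{n,k}$, $C_k$, and the $2$ in $\tilde B_k$'s factor $2+g(n_{k+1})/g(n_k)$), which is exactly where the constant $3\sigma_i\sqrt2/\sqrt{\Delta_i}$ comes from; the drift of $g$ across a window is handled via the ratio bound $g(n_{k+1})/g(n_k)\le 1+\epsilon$ inherited from the proof of Proposition \ref{prop2}. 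You instead fix $n$, look back only to the most recent optimal activation $s(n)$, and invoke the index comparison once, at time $s-1$, so only a single LIL fluctuation of bandit $i$ enters; the transfer from $g(s-1)$ to $g(n)$ is then done directly via $n-s=O(g(n))$, concavity, and $g'(t)\le g(t)/t$ (a one-line fact from concavity and positivity that you should state explicitly, since you need $g'(s-1)=O(g(n)/n)$). The hypothesis $g(n)=o(n^{2/3})$ enters in both proofs for the same reason, to make the $g(n)^2/n$-sized terms $o\bigl(\sqrt{g(n)\ln\ln g(n)}\bigr)$, and your remaining checks (the $\eta(n)$ term and the cross term $g(n)\sqrt{\ln\ln n/n}$) are right. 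The net effect is that your route actually proves the sharper statement with asymptotic constant $\sigma_i\sqrt2/\sqrt{\Delta_i}$ in place of $3\sigma_i\sqrt2/\sqrt{\Delta_i}$, which of course implies Proposition \ref{prop4} as stated; the paper's windowed bound is blunter on the constant but yields, along the way, control of the index of $i$ uniformly over each window rather than only a lower bound on $T^i_{\ismi}$ at the window's start.
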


Again, we leave as an open problem that of extending these results to the case of non-unique optimal bandits.

\section{Comparison between Policies}\label{sec:comparison}
We have established two policies, \gfp\ and \ism, that each achieve $O(g(n))$ pseudo-regret, almost surely. The question of which policy is ``better'' is not necessarily well posed. For one thing, the asymptotic pseudo-regret growth of either policy can be improved by picking a slower $g$. In this sense, there is certainly no ``optimal'' policy as there will always be a slower $g$. For a fixed $g$, however, the question of which policy is better  becomes context specific: for some bandit distributions, the order constant of the \gfp\ policy, $S_\Delta$, will be smaller than the order constant of the \ism\ policy, $K-1$; for some bandit distributions, the comparison will go the other way.

In terms of the results presented here, the pseudo-regret of the \gfp\ policy is much more tightly controlled, Proposition \ref{thm:gfg-strong} bounding the fluctuations in pseudo-regret around $S_\Delta g(n)$ by at most a constant - indeed, at most $S_\Delta$. The bounds on the \ism\ policy however are $O( \sqrt{ g(n) \ln \ln g(n) } )$. But, this additional control of the \gfp\ policy comes at a cost. It follows from the proof of Proposition \ref{thm:gfg-strong} that for sub-optimal $i$, for all large $n$,
\begin{equation}
T^i_{\gfpi}(n) \approx g(n).
\end{equation}
However, for the \ism\ policy, following the proof of \ref{thm:uindex-unique}, for all sub-optimal $i$, and large $n$,
\begin{equation}
T^i_{\ismi}(n) \approx \frac{g(n)}{ \Delta_i }.
\end{equation}
It is clear from this that the \gfp\ policy is in some sense the more democratic of the two, sampling all sub-optimal bandits equally, regardless of quality. The \ism\ policy is the more meritocratic, sampling   sub-optimal bandits more rarely the farther they are from the optimum. This has the effect of boosting the sampling of bandits near the optimum, but this effect is somewhat counterbalanced as they contribute less to the pseudo-regret.

\section{Relaxing Assumptions: i.i.d. Bandits}\label{sec:relax}
The assumption that the results from each bandit are i.i.d. is fairly standard - the problem is generally phrased as a matter of knowledge discovery about a set of unknown distributions, though the use of repeated measurements. However, it is interesting to observe that this assumption actually plays no part in the results and proofs present in this paper. The sole distributional property that mattered for establishing the policies as $g$-good was the assumption that for each bandit there existed some finite $\mu_i$ such that $\bar X^i_k \rightarrow \mu_i$ almost surely with $k$ (though the Law of Iterated Logarithms was utilized to great effect in bounding the remainder terms). In fact, the expected values of the individual $X^i_j$ need not be $\mu_i$, nor must the $X^i_k$ be independent of each other for a given $i$. Further, it is never necessary that the bandits themselves be independent of each other! In that regard, the results herein are actually quite general statements about minimizing pseudo-regret under arbitrary multidimensional stochastic processes that satisfy that strong large number law-type requirement.

However, a word of caution is due: removing the restrictions on $\{ X^i_k \}_{k \geq 1}$ in this way, while not influencing the proofs of the results presented here, does somewhat call into question the definition of ``pseudo-regret'' as given in Eq. \eqref{eqn:pseudo-regret}. The individual sample means freed, it is not necessarily reasonable to define a finite horizon pseudo-regret, $\tilde{R}_\pi(n)$, in terms of the infinite horizon means, $\{ \mu_i \}$. For instance, it is no longer necessarily true that the optimal, complete knowledge policy on any finite horizon is simply to activate a bandit with infinite horizon mean $\mu^*$ at every point. A more applicable definition of pseudo-regret would have to take into account what is reasonable to know or measure about the state of each bandit in finite time.


{\bf Acknowledgement:} 
We would like to acknowledge support for this project
from the National Science Foundation (NSF grant CMMI-14-50743).



\vskip 0.2in
 
\bibliography{mab2015}

\appendix

\section{Proof of Proposition \ref{thm:gfg-strong}}\label{apx:gfg-strong}


\begin{proof}
To prove Proposition \ref{thm:gfg-strong}, it will suffice to show the following: For all $   i: \mu_i \neq \mu^*$ and all  $\delta > 0$, \ase    a finite time $T_\delta < \infty$ such that that,
\begin{equation}\label{eqn:goal}
g(t) - 2\delta \leq  T^i_{\pi^F_g}(t) \leq \ceil{ g(t) } \ 
\mbox{   $\forall t \geq T_\delta$}.
\end{equation}
Theorem \ref{thm:gfg-strong} follows from this result and Eq. \eqref{eqn:pseudo-regret}, with the appropriate choice of $\delta$.

Without loss of generality, we may restrict ourselves to $\delta < 1/2$.

  As a preliminary step: Based on the properties of $g$, if $K$ is the total number of bandits, there exists a finite, not random, time $t_\delta$ such that , the following is true: 
\begin{equation}\label{in:gN}
g(t + K) < g(t) + \delta \, , \ \forall \ t \geq t_\delta.
\end{equation}
This follows from the observation that $g(t+K) \leq g(t) + g'(t)K$, and that $g'(t) \to 0$.  

When implementing a \gfp\ policy $\pi^F_g$ (hereafter referenced simply as $\pi$), there are essentially two alternating phases (or modes)  of the policy: ``catch up'' and ``play the winner''. During ``catch up'', some number of bandits have fewer than $g$ activations (the sub-$g$ bandits), and they are activated until all bandits have at least $g$ activations. During ``play the winner'', each bandit has at least $g$ activations, and the bandit with the current greatest sample mean is activated. These phases can be seen as governed by the function $\Delta(t) = g(t) - \min_i T^i_\pi(t)$ so that when $\Delta(t) >0,$ the policy is in ``catch up'' mode, when $\Delta(t) \le 0 ,$  the policy is in ``play the winner'' mode.

Having activated bandits according to policy $\pi$ up to time $t_\delta$, suppose that $\Delta(t_\delta) > 0$, hence the policy enters or is in a period of ``catch up''. Let $d ( = d(t_\delta))$ be the number of sub-$g$ bandits at time $t_\delta$. Because $g$ is increasing, and there are $d$ sub-$g$ bandits at time $t_\delta$, it will take at least $d$ ``catch up'' activations before the policy enters a period of ``play the winner'' ($\Delta \leq 0$). Consider activating bandits according to policy $\pi$ for $d$ activations. Note, $d \leq K$, so from \inqqref{in:gN} and increasing property of $g$ we have: $g(t_\delta + d) < g(t_\delta) + \delta$. Additionally, $\min_i T^i_\pi(t_\delta + d) \geq \min_i T^i_\pi(t_\delta) + 1$, as every bandit realizing the minimum activations will have been activated at least once. It follows that
\begin{equation}
\begin{split}
\Delta(t_\delta + d) & = g(t_\delta + d) - \min_i T^i_\pi(t_\delta + d)\\
& < g(t_\delta) + \delta - \min_i T^i_\pi(t_\delta) - 1\\
& = \Delta(t_\delta) - (1-\delta).
\end{split}
\end{equation}

Hence, after a period of $d$ activations from time $t_\delta$, the spread $\Delta$ has decreased by at least $1-\delta$. Repeating this argument, based on the number of sub-$g$ bandits (if any) at time $t_\delta+d$, it is clear that eventually - {\sl in finite time} - a time $T_\Delta < \infty$ is reached such that $\Delta(T_\Delta) \leq 0$. At this point, all bandits have been activated at least $g$ times, and the policy enters a period of ``play the winner''. We observe the loose, but sample-path-wise, bound that,
\begin{equation}
 T_\Delta \leq t_\delta + K\frac{(\Delta(t_\delta))^+}{1-\delta} \leq t_\delta + K\frac{g(t_\delta)}{1-\delta} < \infty,
 \end{equation}
since $\Delta(t) \leq g(t)$ always, and at every step the number of sub-$g$ bandits is at most $K$. Observe that if in fact $\Delta(t_\delta) \leq 0$, then we may take $T_\Delta = t_\delta$.

Having entered a period of $\Delta \leq 0$ or ``play the winner'' at time $T_\Delta$, let $t \geq T_\Delta$ such that $\Delta(t) \leq 0$ but $\Delta(t+1) > 0$. That is, in the transition from time $t$ to $t+1$, $g$ surpasses the number of activations of some bandits and the policy enters a period of ``catch up''. At such a point, we have the following relations:
\begin{equation}\label{eqn:gp1}
\min_i T^i_\pi (t + 1) < g(t+1)  < g(t) + \delta  \leq \min_i T^i_\pi(t) + \delta.
\end{equation}
The first inequality is simply that $\Delta(t+1) > 0$, the second following since $t \geq t_\delta$, and the last since $\Delta(t) \leq 0$. However, since the $T^i_\pi$ are integer valued and non-decreasing, the above yields
\begin{equation}\label{eqn:gp2}
\min_i T^i_\pi(t+1) = \min_i T^i_\pi(t).
\end{equation}
Combining Eqns. \eqref{eqn:gp1}, \eqref{eqn:gp2} yields the important relation that $\Delta(t+1) < \delta$. Note additionally,
\begin{equation}\label{eqn:gp3}
g(t+1) < g(t) + \delta  \leq \min_i T^i_\pi(t) + \delta < \min_i T^i_\pi(t+1) + 1.
\end{equation}
Again noting the $T^i_\pi$ are integer valued, this implies that while there are sub-$g$ bandits at time $t+1$, the only sub-$g$ bandits are those that realize the minimum number of activations $\min_i T^i_\pi(t+1)$. All other bandits have activations strictly greater than $g$. Let the number of sub-$g$ bandits at time $t+1$ again be denoted $d = d(t+1)$. For $d' < d\ ( \leq K)$ additional activations under $\pi$, in the ``catch up'' phase, we have that $\min_i T^i_\pi(t + 1 + d') = \min_i T^i_\pi(t + 1)$ and $g(t + 1 + d') < g(t+1) + \delta$. Hence, $\Delta(t+1+d') < \Delta(t + 1) + \delta < 2\delta$. For $d$ additional activations after time $t+1$, each sub-$g$ bandit has been activated once, raising the minimum number of activations by 1: $\min_i T^i_\pi(t + 1 + d) = \min_i T^i_\pi(t+1) + 1$. Additionally, $g(t + 1 + d) < g(t+1) + \delta$, hence $\Delta(t+1+d) < \Delta(t+1) - \delta < 0$.

We see therefore that after $T_\Delta$, at any point at which $\Delta$ becomes positive after being at most zero, it is at most $2\delta$ for a finite time - the ``catch up'' phase -  before becoming negative. Hence it follows, that for $t \geq T_\Delta$, $\Delta(t) \leq 2\delta$, or for each $i$
\begin{equation}\label{eqn:forcingfn}
g(t) - 2\delta \leq T^i_\pi(t).
\end{equation}

Note, this is true for all $i$. This acts as justification for the description of $g$ as the ``forcing function'', as the policy forces all activations to grow at least at $g$ asymptotically.

Since $g$ is unbounded and increasing, all populations are sampled infinitely often over time. Taking the strong law of large numbers to hold, for every $\epsilon > 0$ and each $i$, there exists almost surely some finite $N^i_\epsilon$ such that $\bar X^i_k \in [\mu_i - \epsilon, \mu_i + \epsilon]$ for all $k \geq N^i_\epsilon$. It is worth noting here that while such a $N^i_\epsilon$ exists, it is  random and unknowable to the investigator. Because of the properties of $g$, we may define a finite $T^i_\epsilon > T_\Delta$ such that $N^i_\epsilon \leq g(T^i_\epsilon) - 2\delta$. By Eq. \eqref{eqn:forcingfn}, we have that for all $t \geq T^i_\epsilon$,
\begin{equation}
\bar X^i_{T^i_{\pi}(t)} \in [\mu_i - \epsilon, \mu_i + \epsilon].
\end{equation}

Hence we have for each population, for every $\epsilon > 0$, there exists almost surely a finite random time $T_\epsilon = \max_i T^i_\epsilon < \infty$ past which the sample mean is trapped within the $\mu_i \pm \epsilon$ interval.

Fix $\epsilon$ sufficiently small, so as to distinguish $\mu^*$ from the other means (i.e., $[\mu^* - \epsilon, \mu^* + \epsilon] \cap [\mu_i - \epsilon, \mu_i + \epsilon] = \varnothing$ for all $i: \mu_i \neq \mu^*$). By the previous observations, we have therefore that for all $t \geq T_\epsilon$, for all sub-optimal $i$ and any optimal $i^*$,
\begin{equation}
\bar X^{i^*}_{T^{i^*}_{\pi}(t)} > \bar X^i_{T^i_{\pi}(t)}.
\end{equation}

In short, almost surely there exists a finite time $T_\epsilon$ past which the sample means of sub-optimal bandits are always inferior to the sample mean of any optimal bandit.

By the structure of the policy $\pi$, for all $t \geq T_\epsilon$, sub-optimal populations are only activated during the $g$-forced ``catch up'' periods. If at time $T_\epsilon$, the number of times a sub-optimal bandit $i$ has been activated is greater than $g$ - for instance due to it, at some point, having the largest sample mean during a ``play the winner'' period - that population will not be sampled again until $g$ has increased to overcome this temporary excess. As $g$ is increasing and unbounded, this must occur in finite time. Once this occurs, as observed previously, $g$ can only exceed $T^i_\pi$ by at most $2\delta$ before bandit $i$ is again activated, raising $T^i_\pi$ above $g$ once more. As this ``catch up'' is the only time bandit $i$ is activated, and $\delta < 1/2$, it follows that there exists some finite time $\tilde T^i_\epsilon > T_\epsilon$ such that for $t \geq \tilde T^i_\epsilon$, $T^i_\pi(t) \leq \ceil{ g(t) }$. Taking $T_\delta = \max_{i : \mu_i \neq \mu^*} \tilde T^i_\epsilon$, and noting that $t_\delta \leq T_\Delta \leq T_\epsilon \leq T_\delta < \infty$, we have that for $t \geq T_\delta$, for all sub-optimal $i$,
\begin{equation}
g(t) - 2\delta \leq T^i_\pi(t) \leq \ceil{ g(t) }.
\end{equation}
\end{proof}

\section{Proofs of Propositions \ref{prop1}, \ref{prop2}  }\label{apx:cub}

In this section, $\pi$ refers to a \ism\ policy as in   \eqqref{eqn:pi-g-2}. The results to follow depend on the following lemma.

\begin{lemma}\label{lem1}
Under the assumption of Eq. \eqref{eqn:stln}, for each $i$, and  for any  $\epsilon > 0$, the inequality: $$u_i(j,k) < \mu_i - \epsilon$$ holds for only finitely many $(j,k)$-pairs, almost surely.
\end{lemma}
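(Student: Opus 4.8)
The key observation is that the inflation term $g(j)/k$ is strictly positive, so the bad event decouples into a ``large $k$'' part controlled by the SLLN and a ``small $k$'' part controlled by the unboundedness of $g$. Writing out the index, $u_i(j,k) < \mu_i - \epsilon$ is exactly $\bar X^i_k + g(j)/k < \mu_i - \epsilon$, which (since $g(j)/k > 0$) forces $\bar X^i_k < \mu_i - \epsilon$. So the set of bad pairs is contained in $\{(j,k) : \bar X^i_k < \mu_i - \epsilon\}$, and it suffices to control the latter together with a separate argument for the finitely many remaining values of $k$.

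First I would invoke Eq.~\eqref{eqn:stln}: on the probability-one event that $\bar X^i_k \to \mu_i$, there exists a finite (random) $N = N(\epsilon)$ such that $\bar X^i_k > \mu_i - \epsilon$ for every $k \geq N$. For such $k$, and for \emph{any} $j$, we then have $u_i(j,k) = \bar X^i_k + g(j)/k > \mu_i - \epsilon$, so no bad pair has $k \geq N$.

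Next I would handle the finitely many indices $k \in \{1, \dots, N-1\}$ one at a time. Fix such a $k$. Since $g$ is unbounded and increasing, $g(j) \to \infty$ as $j \to \infty$, hence $u_i(j,k) = \bar X^i_k + g(j)/k \to \infty$; in particular $u_i(j,k) < \mu_i - \epsilon$ for at most finitely many $j$. Taking the union over $k = 1, \dots, N-1$ (a finite union of finite sets) together with the emptiness of the $k \geq N$ part shows that the full set of bad $(j,k)$-pairs is finite, on the almost sure event above. Since the SLLN event has probability one, this establishes the claim.

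\textbf{Main obstacle.} There is no serious obstacle here; the only point requiring care is that one cannot conclude finiteness purely from the SLLN (which says nothing about the behaviour in $j$), so the unboundedness of $g$ must be used explicitly to dispatch the small-$k$ slices. Everything else is a direct unwinding of the definition of $u_i(j,k)$.
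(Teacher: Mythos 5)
Your proposal is correct and follows essentially the same argument as the paper: split at the (random) SLLN threshold $N(\epsilon)$, use positivity of $g(j)/k$ to rule out bad pairs with $k \geq N$, and use the unbounded growth of $g$ to dispatch each of the finitely many slices $k < N$. If anything, you are slightly more explicit than the paper in noting that unboundedness (not just monotonicity) of $g$ is what kills the small-$k$ slices.
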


\begin{proof}
As an application of the strong law, almost surely there is some finite $N^i_\epsilon$ such that $\bar X^i_k > \mu - \epsilon/2 ,$ for all $k \geq N^i_\epsilon$. For such $k$, as $g$ is positive, $u_i(j,k)=
\bar X^i_k + g(j)/k  \geq \mu_i - \epsilon ,$ \  for all $j$. For any $k < N^i_\epsilon$, the relation $u_i(j,k) = \bar X^i_k + g(j)/k   < \mu_i - \epsilon$ may be true only for finitely many $j$ since $g$ is increasing.
\end{proof}

\begin{proof}{\bf  of Proposition \ref{prop1}.}
For $i \neq i^*$, we define the following quantities. Taking $\epsilon> 0$, and $2\epsilon < \mu^* - \mu_i$, and  $n \geq K$,
\begin{equation}
\begin{split}
n^i_1(n, \epsilon) & =  \sum_{t = N}^n \mathbf{1}\{ \pi(t + 1) = i, u_i(t, T^i_\pi(t)) \geq \mu^* - \epsilon,\bar  X^i_{ T^i_\pi(t) } \leq \mu^i + \epsilon \}\\
n^i_2(n, \epsilon) & = \sum_{t = N}^n \mathbf{1}\{ \pi(t + 1) = i, u_i(t, T^i_\pi(t)) \geq \mu^* - \epsilon,\bar  X^i_{ T^i_\pi(t) } > \mu^i + \epsilon \}\\
n^i_3(n, \epsilon) & = \sum_{t = N}^n \mathbf{1}\{ \pi(t + 1) = i, u_i(t, T^i_\pi(t)) < \mu^* - \epsilon \}.
\end{split}
\end{equation}
Hence we have the following relationship,
\begin{equation}
\label{eqn:t-parts}
T^i_{\pi}(n + 1) = 1 + \sum_{t = N}^n \mathbf{1}\{ \pi(t + 1) = i \} = 1 + n^i_1(n, \epsilon) + n^i_2(n, \epsilon) + n^i_3(n, \epsilon).
\end{equation}

The proof proceeds via a pointwise bound on each of the three terms. For the first term,
\begin{equation}
\begin{split}
n^i_1(n, \epsilon) & \leq \sum_{t = N}^n \mathbf{1}\{ \pi(t + 1) = i, \mu^i + \epsilon + g(t)/T^i_{\pi}(t) \geq \mu^* - \epsilon \} \\
& = \sum_{t = N}^n \mathbf{1}\{ \pi(t + 1) = i, g(t)/((\mu^*-\mu_i) - 2\epsilon) \geq T^i_{\pi}(t) \} \\
& \leq \sum_{t = N}^n \mathbf{1}\{ \pi(t + 1) = i, g(n)/((\mu^*-\mu_i) - 2\epsilon ) \geq T^i_{\pi}(t) \} \\
& \leq \frac{g(n)}{(\mu^*-\mu_i) - 2\epsilon} + 1.
\end{split}
\end{equation}
The last inequality comes from viewing $T^i_{\pi}(t)$ as a sum of $\mathbf{1}\{ \pi(t + 1) = i \}$ indicators, and seeing that the condition on it bounds the number of non-zero terms in this sum.

For the second term,
\begin{equation}
\begin{split}
n^i_2(n, \epsilon) & \leq \sum_{t = N}^n \mathbf{1}\{ \pi(t + 1) = i, \bar  X^i_{ T^i_{\pi}(t) } > \mu^i + \epsilon \}\\
& = \sum_{t = N}^n \sum_{k = 1}^t \mathbf{1}\{ \pi(t + 1) = i, \bar  X^i_k > \mu^i + \epsilon, T^i_{\pi}(t) = k \} \\
& = \sum_{t = N}^n \sum_{k = 1}^t \mathbf{1}\{ \pi(t + 1) = i, T^i_{\pi}(t) = k \} \mathbf{1}\{\bar  X^i_k > \mu^i + \epsilon\}\\
& \leq \sum_{k = 1}^n \mathbf{1}\{\bar  X^i_k > \mu^i + \epsilon\} \sum_{t = k}^n \mathbf{1}\{ \pi(t + 1) = i, T^i_{\pi}(t) = k \} \\
& \leq \sum_{k = 1}^n \mathbf{1}\{\bar  X^i_k > \mu^i + \epsilon\}.
\end{split}
\end{equation}
The last inequality holds as, for a given $k$, $\{\pi(t + 1) = i, T^i_{\pi}(t) = k\}$ may be true for only one $t$. Taking it one step further, we have
\begin{equation}
n^i_2(n, \epsilon) \leq \sum_{k = 1}^\infty \mathbf{1}\{\bar  X^i_k > \mu^i + \epsilon\},
\end{equation}
and since the strong law of large numbers is taken to hold, we have therefore that $n^i_2(n)$ is almost surely bound by a finite constant, for all $n \geq K$.

For the third term, note that from the structure of the policy, a population is only sampled if it has the maximal current index. Hence, if $\pi(t + 1) = i$, it must be true that $u_{i^*}(t, T^{i^*}_{\pi}(t)) \leq u_{i}(t, T^{i}_{\pi}(t))$. Hence we have the bound,
\begin{equation}
\begin{split}
n^i_3(n, \epsilon) & \leq \sum_{t = N}^n \mathbf{1}\{ \pi(t + 1) = i, u_{i^*}(t, T^{i^*}_{\pi}(t)) < \mu^* - \epsilon \}\\
& \leq \sum_{t = N}^n \mathbf{1}\{ u_{i^*}(t, T^{i^*}_{\pi}(t)) < \mu^* - \epsilon \}\\
& \leq \sum_{t = N}^\infty \mathbf{1}\{ u_{i^*}(t, T^{i^*}_{\pi}(t)) < \mu^* - \epsilon \}.
\end{split}
\end{equation}
From the prior observation about the form of the index, Lemma \ref{lem1}, we have that $u_{i^*}(t, T^{i^*}_{\pi}(t)) < \mu^* - \epsilon$ is true for only finitely many $t$, almost surely. Hence, from the above bound, $n^i_3(n)$ is almost surely bound by a finite constant, for all $n \geq K$.

Combining the above results bounding $n^i_1, n^i_2, n^i_3$ with Eq.\eqref{eqn:t-parts}, and observing too that $T^i_{\pi}(n) \leq T^i_{\pi}(n+1)$, we have that almost surely there exists some finite $C^i_\epsilon$ such that for all $n \geq K$,
\begin{equation}
T^i_{\pi}(n) \leq \frac{g(n)}{(\mu^*-\mu_i) - 2\epsilon} + C^i_\epsilon.
\end{equation}
\end{proof}
\linebreak[3]

\begin{proof}{\bf  of Proposition \ref{prop2}.}
Define a constant $P_\Delta = \sum_{i \neq i^*} 1/(\mu^* - \mu_i)$. Taking $\epsilon < \min_{j \neq i^*} (\mu^* - \mu_j)/2$, we may apply Prop. \ref{prop1} to yield   for each $i \neq i^*$, \ase a finite $N^i_\epsilon$ such that $T^i_{\pi}(n) \leq (1 + \epsilon) g(n)/(\mu^* - \mu_i)$ for all $n \geq N^i_\epsilon$. Taking $N_\epsilon = \max_{i \neq i^*} N^i_\epsilon$, summing over these relations and taking $n \geq N_\epsilon$, 
\begin{equation}\label{eqn:activation-bound}
\sum_{i \neq i^*} T^i_{\pi}(n) \leq (1+\epsilon)g(n) P_\Delta.
\end{equation}

The sum above equals the number of activations of sub-optimal bandits up to and including time $n$. As the total number of bandit activations up to time $n$ is $n$, we have from the above that $T^{i^*}_{\pi}(n) \geq n - O(g(n))$. 

Trivially from this, the optimal bandit $i^*$ is activated infinitely often, approaching full density of activations as $n$ increases.

Given this linear lower bound on $T^{i^*}_{\pi}$, it follows that $u_{i^*}( n, T^{i^*}_{\pi}(n) )$ converges to $\mu^*$, almost surely. Hence, almost surely there exists a finite $\tilde N_\epsilon$ such that for $n \geq \tilde N_\epsilon$, $u_{i^*}( n, T^{i^*}_{\pi}(n) ) \leq \mu^* + \epsilon$. As under this policy a bandit is only activated when it has the maximal index, it follows that infinitely often (on the activations of $i^*$), the indices of all sub-optimal bandits are at most $\mu^* + \epsilon$. Given the structure of the indices, it follows that these sub-optimal bandits must be activated infinitely often as well. Hence, almost surely, $T^i_{\pi}(n)$ increases without bound, for all $i$. Applying the strong law here, since there are finitely many bandits being considered, \ase a finite  ``$\epsilon$-trapping time'', $\tilde N^{\text{trap}}_\epsilon, $ such that 
  $${\bar X}^i_{T^i_{\pi}(n)} \in \left[ \mu_i - \epsilon, \mu_i + \epsilon \right], \     \forall n \geq \tilde N^\text{trap}_\epsilon \ and \  \forall i.$$

Let $\{n_k\}_{k \geq 0}$ be the infinite sequence of times at which bandit $i^*$ has the current optimal index (and hence is activated next). For a given $i \neq i^*$, we have that for all sufficiently large $k$ $(n_k \geq \tilde N^\text{trap}_\epsilon)$,
\begin{equation}\label{eqn:first-max}
\begin{split}
\max_{n_k \leq n \leq n_{k+1}} u_i(n, T^i_{\pi}(n)) & \leq (\mu_i + \epsilon) + \frac{g(n_{k+1})}{T^i_{\pi}(n_k)}\\
& = (\mu_i + \epsilon) + \frac{g(n_{k+1})}{g(n_k)} \frac{ g(n_k) }{ T^i_{\pi}(n_k) }\\
& = (\mu_i + \epsilon) + \frac{g(n_{k+1})}{g(n_k)} \left(  u_i(n_k, T^i_{\pi}(n_k)) - \bar X^i_{T^i_{\pi}(n_k)} \right)\\
& \leq (\mu_i + \epsilon) + \frac{g(n_{k+1})}{g(n_k)} \left( u_i(n_k, T^i_{\pi}(n_k)) - (\mu_i - \epsilon) \right).\\
\end{split}
\end{equation}

Additionally, however, at time $n_k$ bandit $i^*$ has the largest index. For sufficiently large $k$ $(n_k \geq \tilde N_\epsilon)$, this index must be at most $\mu^* + \epsilon$. Hence for $n_k > \max( \tilde N_\epsilon, \tilde N^\text{trap}_\epsilon)$, for $i \neq i^*$ we have that $u_{i}( n_k, T^{i}_{\pi}(n_k) ) \leq u_{i^*}( n_k, T^{i^*}_{\pi}(n_k) ) \leq \mu^* + \epsilon$, and
\begin{equation}
\begin{split}
\max_{n_k \leq n \leq n_{k+1}} u_i(n, T^i_{\pi}(n)) & \leq (\mu_i + \epsilon) + \frac{g(n_{k+1})}{g(n_k)} \left( (\mu^* + \epsilon) - (\mu_i - \epsilon) \right)\\
& = (\mu_i + \epsilon) + \frac{g(n_{k+1})}{g(n_k)} \left( \mu^* - \mu_i + 2\epsilon \right).
\end{split}
\end{equation}


Since we took $g$ to be concave, $g(n_{k+1}) \leq g(n_k) + (n_{k+1} - n_k)g'(n_k)$. The difference $n_{k+1} - n_k - 1$ is the number of sub-optimal bandit activations between the $k$ and $k+1$-th activations of bandit $i^*$. This is bound from above by the {\sl total} number of sub-optimal activations prior to time $n_{k+1}$, which by Eq. \eqref{eqn:activation-bound} is at most $(1+\epsilon)g(n_{k+1}) P_\Delta$ for all $n_{k+1} \geq N_\epsilon$. Hence,
\begin{equation}
g(n_{k+1}) \leq g(n_k) + ((1+\epsilon)g(n_{k+1}) P_\Delta + 1)g'(n_k).
\end{equation}
As $g' \rightarrow 0$, for all sufficiently large $k$, we have that $(1+\epsilon)P_\Delta g'(n_k) < 1$ and
\begin{equation}\label{eqn:g-ratio}
\frac{ g(n_{k+1}) }{g(n_k)} \leq \frac{1 + \frac{g'(n_k)}{g(n_k)}}{1 - (1+\epsilon)P_\Delta g'(n_k)}.
\end{equation}

As $g$ is taken to be increasing, and $g'$ is taken to limit to $0$, we have from the above that there is some finite $\tilde N^g_\epsilon$ such that for all sufficiently large $k$ $(n_k \geq N^g_\epsilon)$, $g(n_{k+1})/g(n_k) \leq 1 + \epsilon$. Hence, for $n_k \geq \max( N_\epsilon, \tilde N_\epsilon, \tilde N^\text{trap}_\epsilon, \tilde N^g_\epsilon)$,
\begin{equation}\label{eqn:third-max}
\max_{n_k \leq n \leq n_{k+1}} u_i(n, T^i_{\pi}(n)) \leq (\mu_i + \epsilon) + (1 + \epsilon)(\mu^*-\mu_i + 2\epsilon).
\end{equation}

Let $N^K_\epsilon = \min \{ n_k : n_k > \max( N_\epsilon, \tilde N_\epsilon, \tilde N^\text{trap}_\epsilon, \tilde N^g_\epsilon) \}  < \infty$. As the upper bound above no longer depends on $k$, we have that for $n \geq N^K_\epsilon$,

\begin{equation}\label{eqn:fourth-max}
u_i(n, T^i_{\pi}(n)) \leq (\mu_i + \epsilon) + (1 + \epsilon)(\mu^*-\mu_i + 2\epsilon).
\end{equation}

Observing that ${\bar X}^i_{T^i_{\pi}(n)} \geq \mu_i - \epsilon$, the above yields $\mu_i - \epsilon + g(n)/T^i_{\pi}(n)  \leq (\mu_i + \epsilon) + (1 + \epsilon)(\mu^*-\mu_i + 2\epsilon)$, or
\begin{equation}\label{eqn:fifth-max}
\frac{g(n)}{(1 + \epsilon)(\mu^*-\mu_i + 2\epsilon) + 2\epsilon} \leq T^i_{\pi}(n).
\end{equation}
\end{proof}

\section{Proofs of Propositions \ref{prop3}, \ref{prop4}  }\label{apx:lil}
We present the following preliminary bounds to aid in the proofs of Props. \ref{prop3}, \ref{prop4}. In this section, $\pi$ is taken to be an \ism\ policy as in Eq. \ref{eqn:pi-g-2}. Additionally, it is convenient to define 
\begin{equation}
P_\Delta = \sum_{i \neq i^*} \frac{1}{ \mu^* - \mu_i }.
\end{equation}

It follows from Props. \ref{prop1}, \ref{prop2} that for any $\epsilon > 0$, \ase  some finite $N_{\epsilon}$ such that for $n \geq N_\epsilon$, the following holds: for $i \neq i^*$,
\begin{equation}\label{eqn:g-bound}
\frac{1-\epsilon}{\mu^* - \mu_i} g(n) \leq T^i_{\pi}(n) \leq \frac{1+\epsilon}{\mu^* - \mu_i} g(n).
\end{equation}
And similarly, for the optimal bandit,
\begin{equation}
n - (1 + \epsilon) P_\Delta g(n) \leq T^{i^*}_{\pi}(n) \leq n - (1 - \epsilon) P_\Delta g(n).
\end{equation}
To simplify the case for the optimal bandit, slightly, it also holds that for all sufficiently large n, $T^{i^*}_{\pi}(n) \geq n/2$. We'll also observe here, as an aside, that for some finite $\tilde N_\epsilon$, $$(1-\epsilon)/(\mu^* - \mu_i) g(n) > 6,\ \ \mbox{for all $n \geq \tilde N_\epsilon, \ and \  i \neq i^*$} .$$

As each bandit is activated infinitely often, $T^i_\pi$ increases without bound with $n$, and hence we may apply the Law of the Iterated Logarithm in the following way: There exists a finite time $N'_\epsilon$ such that for $n \geq N'_\epsilon$, for each bandit $i$,
\begin{equation}
\lvert \bar X^i_{T^i_\pi(n)} - \mu_i \rvert \leq \sigma_i \sqrt{2}(1+\epsilon) \sqrt{ \frac{ \ln \ln T^i_\pi(n) }{ T^i_\pi(n) } }.
\end{equation}

However, since $\sqrt{ \ln \ln x / x }$ is decreasing for all $x \geq 6$, we may apply the above bounds to have that, for $n \geq \max( N_\epsilon, N'_\epsilon, \tilde N_\epsilon, 12 )$, for $i \neq i^*$,
\begin{equation}\label{eqn:lil-bound}
\lvert \bar X^i_{T^i_\pi(n)} - \mu_i \rvert \leq \sigma_i \sqrt{2}(1+\epsilon) \sqrt{ \frac{ \ln \ln \left(\frac{1-\epsilon}{\mu^* - \mu_i} g(n)\right) }{ \frac{1-\epsilon}{\mu^* - \mu_i} g(n) } },
\end{equation}
and for the optimal bandit,
\begin{equation}\label{eqn:lil-bound-optimal}
\lvert \bar X^{i^*}_{T^{i^*}_\pi(n)} - \mu^* \rvert \leq \sigma_{i^*} \sqrt{2}(1+\epsilon) \sqrt{ \frac{ \ln \ln (n/2) }{ n/2 } }.
\end{equation}

\begin{proof}{\bf  of Proposition \ref{prop3}.}
Let $1 > \epsilon > 0$. For $i \neq i^*$, let
\begin{equation}
h_i(t) =  \sigma_i \sqrt{2(\mu^* - \mu_i)}  \frac{ (1 + \epsilon)^2 }{ \sqrt{ 1 - \epsilon } } \sqrt{ \frac{ \ln \ln g(t) }{g(t)} }.
\end{equation}
Observe that $h_i \rightarrow 0$ from above as $t \rightarrow \infty$. Note that there exists a $T_\epsilon < \infty$ such that for $t \geq T_\epsilon$, $g(t)/( \mu^* - \mu_i - 2 h_i(t))$ is increasing. The proof proceeds analogously to the proof of Prop. \ref{prop1}, utilizing the improved iterated logarithm bounds above.

For $n \geq T_\epsilon$, define the following functions:
\begin{equation}
\begin{split}
\tilde n^i_1(n) & = \sum_{t = T_\epsilon}^n \mathbf{1}\{ \pi(t + 1) = i, u_i(t, T^i_\pi(t)) \geq \mu^* - h_i(t),\bar  X^i_{ T^i_\pi(t) } \leq \mu_i + h_i(t) \}\\
\tilde n^i_2(n) & = \sum_{t = T_\epsilon}^n \mathbf{1}\{ \pi(t + 1) = i, u_i(t, T^i_\pi(t)) \geq \mu^* - h_i(t),\bar  X^i_{ T^i_\pi(t) } > \mu_i + h_i(t) \}\\
\tilde n^i_3(n) & = \sum_{t = T_\epsilon}^n \mathbf{1}\{ \pi(t + 1) = i, u_i(t, T^i_\pi(t)) < \mu^* - h_i(t) \}.
\end{split}
\end{equation}

Hence, we have the following relationship, that for $n \geq T_\epsilon$,
\begin{equation}
T^i_\pi(n) \leq T_\epsilon + 1 + \tilde n^i_1(n) + \tilde n^i_2(n) + \tilde n^i_3(n).
\end{equation}

The proof proceeds as in the proof of Prop. \ref{prop1}, bounding each of the three terms. For the first,
\begin{equation}
\begin{split}
\tilde n^i_1(n) & \leq \sum_{t = T_\epsilon}^n \mathbf{1}\{ \pi(t + 1) = i, \mu_i + h_i(t) + g(t)/T^i_{\pi}(t) \geq \mu^* - h_i(t) \} \\
& = \sum_{t = T_\epsilon}^n \mathbf{1}\{ \pi(t + 1) = i, g(t)/((\mu^*-\mu_i) - 2 h_i(t)) \geq T^i_{\pi}(t) \} \\
& \leq \sum_{t = T_\epsilon}^n \mathbf{1}\{ \pi(t + 1) = i, g(n)/((\mu^*-\mu_i) - 2 h_i(n) ) \geq T^i_{\pi}(t) \} \\
& \leq \frac{g(n)}{(\mu^*-\mu_i) - 2 h_i(n)} + 1.
\end{split}
\end{equation}
As before, the last inequality comes from viewing $T^i_{\pi}(t)$ as a sum of $\mathbf{1}\{ \pi(t + 1) = i \}$ indicators, and seeing that the condition on it bounds the number of non-zero terms in this sum. It is also important to observe here that we are explicitly in a regime in which $g(t)/((\mu^*-\mu_i) - 2 h_i(t))$ is an increasing function with $t$.

For the second term,
\begin{equation}
\begin{split}
\tilde n^i_2(n) & \leq \sum_{t = T_\epsilon}^n \mathbf{1}\{ \pi(t + 1) = i, \bar  X^i_{ T^i_{\pi}(t) } > \mu_i + h_i(t) \}\\
& \leq \sum_{t = T_\epsilon}^n \mathbf{1}\{ \bar  X^i_{ T^i_{\pi}(t) } - \mu_i > h_i(t) \}\\
& \leq \sum_{t = T_\epsilon}^n \mathbf{1}\left\{ \sigma_i \sqrt{2}(1+\epsilon) \sqrt{ \frac{ \ln \ln \left(\frac{1-\epsilon}{\mu^* - \mu_i} g(t)\right) }{ \frac{1-\epsilon}{\mu^* - \mu_i} g(t) } } > h_i(t) \right\}\\
\end{split}
\end{equation}
The last inequality holds, by the iterated logarithm bound in Eq. \eqref{eqn:lil-bound}. Taking it one step further, we have
\begin{equation}
\tilde n^i_2(n) \leq \sum_{t = T_\epsilon}^\infty \mathbf{1}\left\{ \frac{ \sigma_i \sqrt{2}(1+\epsilon)}{ h_i(t) } \sqrt{ \frac{ \ln \ln \left(\frac{1-\epsilon}{\mu^* - \mu_i} g(t)\right) }{ \frac{1-\epsilon}{\mu^* - \mu_i} g(t) } } > 1 \right\}.
\end{equation}
Note that as
\begin{equation}\label{eqn:ci-limit}
\lim_t \frac{ \sigma_i \sqrt{2}(1+\epsilon)}{ h_i(t) } \sqrt{ \frac{ \ln \ln \left(\frac{1-\epsilon}{\mu^* - \mu_i} g(t)\right) }{ \frac{1-\epsilon}{\mu^* - \mu_i} g(t) } } = \frac{ 1 }{ 1 + \epsilon } < 1,
\end{equation}
the event indicated in the above sum bounding $\tilde n^i_2(n)$ may occur only finitely may times, almost surely. Hence, $\tilde n^i_2(n)$ is almost surely bound by a finite constant, for all $n \geq T_\epsilon$.

For the third term, as before, by the structure of the policy, a population is only sampled if it has the maximal current index. Hence, if $\pi(t + 1) = i$, it must be true that $u_{i^*}(t, T^{i^*}_{\pi}(t)) \leq u_{i}(t, T^{i}_{\pi}(t))$. It follows that
\begin{equation}
\begin{split}
\tilde n^i_3(n) & \leq \sum_{t = T_\epsilon}^n \mathbf{1}\{ \pi(t + 1) = i, u_{i^*}(t, T^{i^*}_{\pi}(t)) < \mu^* - h_i(t)\}\\
& \leq \sum_{t = T_\epsilon}^n \mathbf{1}\{ u_{i^*}(t, T^{i^*}_{\pi}(t)) < \mu^* - h_i(t) \}\\
& = \sum_{t = T_\epsilon}^n \mathbf{1}\left\{ \bar X^{i^*}_{T^{i^*}_\pi(t)} + \frac{g(t)}{ T^{i^*}_{\pi}(t) } < \mu^* - h_i(t) \right\}\\
& \leq \sum_{t = T_\epsilon}^n \mathbf{1}\left\{ -\sigma_{i^*} \sqrt{2}(1+\epsilon) \sqrt{ \frac{ \ln \ln (t/2) }{ t/2 } }+ \frac{g(t)}{ T^{i^*}_{\pi}(t) } < -h_i(t) \right\},
\end{split}
\end{equation}
the last equation coming from the iterated logarithm bound for the optimal bandit, Eq. \eqref{eqn:lil-bound-optimal}. As a final simplification,
\begin{equation}
\tilde n^i_3(n) \leq \sum_{t = T_\epsilon}^\infty \mathbf{1}\left\{ -\sigma_{i^*} \sqrt{2}(1+\epsilon) \sqrt{ \frac{ \ln \ln (t/2) }{ t/2 } } < -h_i(t) \right\}.
\end{equation}

If $g(n) = o( n / \ln \ln n )$, it is easy to verify that the indicated event in the above sum can only occur for finitely many $t$. Hence, by the above, there is a finite constant bounding $\tilde n^i_3(n)$ for all $n \geq T_\epsilon$.

Combining the above results, there is a finite constant $D^\epsilon_i$ such that for all $n \geq T_\epsilon$,
\begin{equation}
T^i_\pi(n) \leq \frac{g(n)}{(\mu^*-\mu_i) - 2 h_i(n)} + D^\epsilon_i.
\end{equation}
We have from this that
\begin{equation}
(\mu^* - \mu_i) T^i_\pi(n) - g(n) \leq g(n) \frac{2 h_i(n)}{(\mu^*-\mu_i) - 2 h_i(n)}  + (\mu^* - \mu_i) D^\epsilon_i.
\end{equation}
For a fixed $\epsilon > 0$, the above yields (taking the limit, given the choice of $h_i(n)$),
\begin{equation}
\limsup_n \frac{ (\mu^* - \mu_i) T^i_\pi(n) - g(n) }{ \sqrt{ g(n) \ln \ln g(n) } } \leq \frac{2 \sigma_i \sqrt{2}  (1 + \epsilon)^2 }{\sqrt{\mu^* - \mu_i}\sqrt{ 1- \epsilon}}.
\end{equation}

As the above holds for all $\epsilon > 0$, this yields, almost surely,
\begin{equation}
\limsup_n \frac{ (\mu^* - \mu_i) T^i_\pi(n) - g(n) }{ \sqrt{ g(n) \ln \ln g(n) } } \leq \frac{2 \sigma_i \sqrt{2} }{\sqrt{\mu^* - \mu_i}}.
\end{equation}
\end{proof}

\begin{proof}{\bf  of Proposition \ref{prop4}.}
Let $ \epsilon \in ( 0,1) . $ Recall from the proof of Prop. \ref{prop2} the infinite sequence $\{ n_k \}_{k \geq 0}$ of times at which the index of the optimal bandit $i^*$ is maximal. For notational convenience, we will write $u_i(n) = u_i(n, T^i_\pi(n))$, and for $i \neq i^*$, we define
\begin{equation}
U^i_k = \max_{n_k \leq n \leq n_{k+1}} u_i(n),
\end{equation}
and
\begin{equation}
M^i_k = \max_{n_k \leq n \leq n_{k+1}} \bar X^i_{T^i_\pi(n)}.
\end{equation}

We have the following relations,
\begin{equation}
\begin{split}
U^i_k & \leq \left( \max_{n_k \leq n' \leq n_{k+1}} \bar X^i_{T^i_\pi(n')} \right) + \frac{g(n_{k+1})}{T^i_{\pi}(n_k)}\\
& = M^i_k + \frac{g(n_{k+1})}{g(n_k)}\frac{g(n_{k})}{T^i_{\pi}(n_k)}\\
& = M^i_k + \frac{g(n_{k+1})}{g(n_k)}\left( u_i(n_k) - \bar X^i_{T^i_\pi(n_k)} \right)\\
& \leq M^i_k + \frac{g(n_{k+1})}{g(n_k)}\left( u_{i^*}(n_k) - \bar X^i_{T^i_\pi(n_k)} \right). 
\end{split}
\end{equation}

For $n$ such that $n_k \leq n \leq n_{k+1}$, trivially $u_i(n) \leq U^i_k$. It follows that
\begin{equation}
\frac{g(n)}{T^i_\pi(n)} \leq \left( M^i_k -  \bar X^i_{T^i_\pi(n)} \right) +  \frac{g(n_{k+1})}{g(n_k)}\left( u_{i^*}(n_k) - \bar X^i_{T^i_\pi(n_k)} \right).
\end{equation}

Defining the following terms for space,
\begin{equation}
\begin{split}
A_{n, k} & = \left( M^i_k -  \bar X^i_{T^i_\pi(n)} \right), \\
B_{k} & = \frac{ g(n_{k+1}) }{ g(n_k) } u_{i^*}(n_k) - \mu^*,\\
C_{k} & = \frac{ g(n_{k+1}) }{ g(n_k) } \bar X^i_{T^i_\pi(n_k)} - \mu_i,\\
\Delta(n) & = g(n) - (\mu^* - \mu_i) T^i_\pi(n),
\end{split}
\end{equation}
The above relation may be rearranged to yield
\begin{equation}
\Delta(n)/T^i_\pi(n) \leq A_{n, k} + B_{k}  - C_{k}.
\end{equation}

We may apply the iterated logarithm bounds of Eq. \eqref{eqn:lil-bound}, to yield a finite $K_A$ such that for $k \geq K_A$,
\begin{equation}
A_{n, k} \leq 2 \sigma_i \sqrt{2}(1+\epsilon) \sqrt{ \frac{ \ln \ln \left(\frac{1-\epsilon}{\mu^* - \mu_i} g(n_k)\right) }{ \frac{1-\epsilon}{\mu^* - \mu_i} g(n_k ) } }.
\end{equation}

Similarly, there is a finite $K_B$ such that for $k \geq K_B$, observing that for sufficiently large $k$, $T^{i^*}_{\pi}(n_k) \geq n_k/2$,
\begin{equation}
B_{k} \leq \frac{ g(n_{k+1}) }{ g(n_k) } \left( \mu^* + \sigma_{i^*} \sqrt{2}(1+\epsilon) \sqrt{ \frac{ \ln \ln (n_k/2) }{ n_k/2 } } + \frac{g(n_k)}{n_k/2} \right)  - \mu^*.
\end{equation}

And finally, there is a finite $K_C$ such that for $k \geq K_C$,
\begin{equation}
C_{k} \geq \frac{ g(n_{k+1}) }{ g(n_k) } \left( \mu_i - \sigma_i \sqrt{2}(1+\epsilon) \sqrt{ \frac{ \ln \ln \left(\frac{1-\epsilon}{\mu^* - \mu_i} g(n_k)\right) }{ \frac{1-\epsilon}{\mu^* - \mu_i} g(n_k) } } \right) - \mu_i.
\end{equation}

Rearranging terms for space again, for $k \geq \max( K_A, K_B, K_C )$ we have
\begin{equation}\label{eqn:tilde}
\Delta(n)/T^i_\pi(n) \leq A_{n, k} + B_{k}  - C_{k} \leq \tilde A_{k} + \tilde B_{k} + \tilde C_{k} + \tilde D_{k},
\end{equation}
where
\begin{equation}
\begin{split}
\tilde A_{k} & = (\mu^* - \mu_i)\left( \frac{g(n_{k+1})}{ g(n_k) } - 1\right) \\
\tilde B_{k} & = \sigma_i \sqrt{2}(1+\epsilon)\left(2   + \frac{ g(n_{k+1}) }{ g(n_k) }  \right)\sqrt{ \frac{ \ln \ln \left(\frac{1-\epsilon}{\mu^* - \mu_i} g(n_k)\right) }{ \frac{1-\epsilon}{\mu^* - \mu_i} g(n_k ) } } \\
\tilde C_{k} & = \sigma_{i^*} \sqrt{2}(1+\epsilon) \frac{ g(n_{k+1}) }{ g(n_k) }  \sqrt{ \frac{ \ln \ln (n_k/2) }{ n_k/2 } } \\
\tilde D_{k} & = \frac{ g(n_{k+1}) }{ g(n_k) } \frac{g(n_k)}{n_k/2}.  \\
\end{split}
\end{equation}
Noting that each of the above are positive, we have from Eq. \eqref{eqn:tilde},
\begin{equation}
\frac{ \Delta(n) }{ \sqrt{g(n) \ln \ln g(n)} } \leq \frac{ (\tilde A_{k} + \tilde B_{k} + \tilde C_{k} + \tilde D_{k}) T^i_\pi(n) }{ \sqrt{ g(n) \ln \ln g(n) } }.
\end{equation}
Note that, applying Eq. \eqref{eqn:g-bound} in this case, we have some finite $K_\epsilon$ such that for $k \geq K_\epsilon$,
\begin{equation}
T^i_\pi(n) \leq T^i_\pi(n_{k+1}) \leq \frac{1+\epsilon}{\mu^* - \mu_i} g(n_{k+1}).
\end{equation}
Recall from the proof of Prop. \ref{prop2} that there is a finite $K'_\epsilon$ such that for $k \geq K'_\epsilon$, $g(n_{k+1}) \leq (1 + \epsilon)g(n_k)$. Noting too that $g(n_k) \leq g(n)$, we have that for $k \geq \max( K_\epsilon, K'_\epsilon )$,
\begin{equation}\label{eqn:79}
\frac{ \Delta(n) }{ \sqrt{g(n) \ln \ln g(n)} } \leq \frac{ (\tilde A_{k} + \tilde B_{k} + \tilde C_{k} + \tilde D_{k})}{ \sqrt{ g(n_k) \ln \ln g(n_k) } } \frac{(1+\epsilon)^2}{ \left( \mu^* - \mu_i \right)} g(n_{k}).
\end{equation}
We have
\begin{equation}
\begin{split}
\frac{ \tilde D_k g(n_k) }{  \sqrt{ g(n_k) \ln \ln g(n_k) } }  & = \frac{ g(n_{k+1}) }{ g(n_k) } \frac{g(n_k)}{n_k/2} \frac{ g(n_k) }{  \sqrt{ g(n_k) \ln \ln g(n_k) } } \\
& \leq 2 (1 + \epsilon) \frac{ g(n_k)^{3/2} }{  n_k \sqrt{ \ln \ln g(n_k) } }\\
& = o(1).
\end{split}
\end{equation}
The last relationship follows, taking $g(n) = o( n^{2/3} )$.

We have
\begin{equation}
\begin{split}
\frac{ \tilde C_k g(n_k) }{  \sqrt{ g(n_k) \ln \ln g(n_k) } } & = 2\sigma_{i^*} (1+\epsilon) \frac{ g(n_{k+1}) }{ g(n_k) }  \sqrt{ \frac{ \ln \ln (n_k/2) }{ n_k }  \frac{ g(n_k) }{  \ln \ln g(n_k) } }\\
& \leq 2\sigma_{i^*} (1+\epsilon)^2 \sqrt{ \frac{ \ln \ln (n_k/2) }{ n_k }  \frac{ g(n_k) }{  \ln \ln g(n_k) } }\\
& = o(1).
\end{split}
\end{equation}
The last relationship follows, taking $g(n) = o( n / \ln \ln n )$.

We have
\begin{equation}
\begin{split}
\frac{ \tilde B_k g(n_k) }{  \sqrt{ g(n_k) \ln \ln g(n_k) } } & = \sigma_i \sqrt{2}(1+\epsilon)\left(2   + \frac{ g(n_{k+1}) }{ g(n_k) }  \right)\sqrt{ \frac{ \ln \ln \left(\frac{1-\epsilon}{\mu^* - \mu_i} g(n_k)\right) }{ \frac{1-\epsilon}{\mu^* - \mu_i} g(n_k ) } } \sqrt{ \frac{ g(n_k) }{  \ln \ln g(n_k) } }\\
& \leq \frac{ \sigma_i \sqrt{2}(1+\epsilon)\left(3   + \epsilon  \right)}{  \sqrt{ \frac{1-\epsilon}{\mu^* - \mu_i} } } \sqrt{ \frac{ \ln \ln \left(\frac{1-\epsilon}{\mu^* - \mu_i} g(n_k)\right) }{  \ln \ln g(n_k) } }\\
& = \frac{ \sigma_i \sqrt{2}(1+\epsilon)\left(3   + \epsilon  \right)}{  \sqrt{ \frac{1-\epsilon}{\mu^* - \mu_i} } } \left( 1 + o(1) \right).
\end{split}
\end{equation}
The last relationship follows, taking the $\{ n_k \}_{k \geq 0}$ as infinite and unbounded, and $g$ as increasing and unbounded.

We have
\begin{equation}
\begin{split}
\frac{ \tilde A_k g(n_k) }{  \sqrt{ g(n_k) \ln \ln g(n_k) } } & = (\mu^* - \mu_i)\left( \frac{g(n_{k+1})}{ g(n_k) } - 1\right)\sqrt{ \frac{ g(n_k) }{   \ln \ln g(n_k) } }.
\end{split}
\end{equation}
Let $\delta > 1$ by fixed. We use the bound here that for all positive $x \leq 1 - 1/\delta$, $1/(1-x) \leq 1 + \delta x$. Applying Eq. \eqref{eqn:g-ratio}, we have for sufficiently large $k$,
\begin{equation}
\begin{split}
\frac{g(n_{k+1})}{ g(n_k) } - 1 & \leq \frac{ 1 + \frac{ g'(n_k) }{ g(n_k) } }{ 1 - (1 + \epsilon) P_\Delta g'(n_k) } - 1 \\
&  \leq \left(1 + \frac{ g'(n_k) }{ g(n_k) }\right)\left(1 + \delta (1 + \epsilon) P_\Delta g'(n_k) \right) - 1\\
& = g'(n_k) \left( \delta (1 + \epsilon) P_\Delta  + o(1) \right).
\end{split}
\end{equation}
The last relationship follows, as $g' \rightarrow 0$ and $g \rightarrow \infty$ with $n_k$. Applying this to the above bound,
\begin{equation}
\begin{split}
\frac{ \tilde A_k g(n_k) }{  \sqrt{ g(n_k) \ln \ln g(n_k) } } & \leq (\mu^* - \mu_i) \left( \delta (1 + \epsilon) P_\Delta  + o(1) \right) g'(n_k) \sqrt{ \frac{ g(n_k) }{   \ln \ln g(n_k) } }\\
& = o(1).
\end{split}
\end{equation}
The last relationship follows, taking $g(n) = o( n^{2/3} )$.

Applying all of the above to the bound in Eq. \eqref{eqn:79}, this yields
\begin{equation}
\frac{ \Delta(n) }{ \sqrt{g(n) \ln \ln g(n)} } \leq  \left( \frac{ \sigma_i \sqrt{2}(1+\epsilon)\left(3   + \epsilon  \right)}{  \sqrt{ \frac{1-\epsilon}{\mu^* - \mu_i} } } \left( 1 - o(1) \right) + o(1)\right)  \frac{(1+\epsilon)^2}{ \left( \mu^* - \mu_i \right)},
\end{equation}
or
\begin{equation}
\limsup_n \frac{ \Delta(n) }{ \sqrt{g(n) \ln \ln g(n)} } \leq \left( \frac{ \sigma_i \sqrt{2}(1+\epsilon)\left(3   + \epsilon  \right)}{  \sqrt{ \frac{1-\epsilon}{\mu^* - \mu_i} } } \right)  \frac{(1+\epsilon)^2}{ \left( \mu^* - \mu_i \right)}.
\end{equation}
Taking the limit as $\epsilon \rightarrow 0$ completes the proof,
\begin{equation}
\limsup_n \frac{ g(n) - (\mu^* - \mu_i) T^i_\pi(n) }{ \sqrt{g(n) \ln \ln g(n)} } \leq \frac{ 3\sigma_i \sqrt{2}}{  \sqrt{ \mu^* - \mu_i } }.
\end{equation}


\end{proof}

\end{document}